\documentclass[anonymous]{article}

\usepackage[final]{neurips_2025}

\usepackage[utf8]{inputenc} 
\usepackage[T1]{fontenc}    
\usepackage{hyperref}       
\usepackage{url}            
\usepackage{booktabs}       
\usepackage{amsfonts}       
\usepackage{nicefrac}       
\usepackage{microtype}      
\usepackage{xcolor}         
\usepackage{graphicx}
\usepackage{subcaption} 
\usepackage[labelsep=space]{caption}
\usepackage[skip=2pt]{caption}
\usepackage{amsmath,amssymb}
\usepackage{bm} 
\usepackage[ruled,linesnumbered]{algorithm2e}
\usepackage{enumitem} 
\usepackage{multirow}
\usepackage{graphicx}
\usepackage{amsmath}
\usepackage{amsthm}
\usepackage{wrapfig}
\usepackage{threeparttable}

\title{Learning Memory-Enhanced Improvement Heuristics for Flexible Job Shop Scheduling}

\author{%
  Jiaqi Wang\textsuperscript{1,2}, 
  Zhiguang Cao\textsuperscript{4}, 
  Peng Zhao\textsuperscript{1,3}\thanks{Corresponding author.}, 
  Rui Cao\textsuperscript{1,3}, \\
  \textbf{Yubin Xiao\textsuperscript{1,3}, 
  Yuan Jiang\textsuperscript{5}\footnotemark[1], 
  You Zhou\textsuperscript{1,2,3}\footnotemark[1]}\\[0.1cm]
  \textsuperscript{1}Key Laboratory of Symbolic Computation and Knowledge\\
  Engineering of Ministry of Education, Jilin University\\
  \textsuperscript{2}College of Software, Jilin University\\ 
  \textsuperscript{3}College of Computer Science and Technology, Jilin University\\ 
  \textsuperscript{4}Singapore Management University\;
  \textsuperscript{5}Nanyang Technological University\\[0.1cm]
  \texttt{\{jqwang24, pengzhao23, ruicao24, xiaoyb21\}@mails.jlu.edu.cn}\\
  \texttt{zhiguangcao@outlook.com, yuan005@e.ntu.edu.sg, zyou@jlu.edu.cn}\\[0.3cm]
}
\begin{document}

\maketitle

\begin{abstract}
  The rise of smart manufacturing under Industry 4.0 introduces mass customization and dynamic production, demanding more advanced and flexible scheduling techniques. The flexible job-shop scheduling problem (FJSP) has attracted significant attention due to its complex constraints and strong alignment with real-world production scenarios. Current deep reinforcement learning (DRL)-based approaches to FJSP predominantly employ constructive methods. While effective, they often fall short of reaching (near-)optimal solutions. In contrast, improvement-based methods iteratively explore the neighborhood of initial solutions and are more effective in approaching optimality. However, the flexible machine allocation in FJSP poses significant challenges to the application of this framework, including accurate state representation, effective policy learning, and efficient search strategies. To address these challenges, this paper proposes a \textbf{M}emory-enhanced \textbf{I}mprovement \textbf{S}earch framework with he\textbf{t}erogeneous gr\textbf{a}ph \textbf{r}epresentation—\textit{MIStar}. It employs a novel heterogeneous disjunctive graph that explicitly models the operation sequences on machines to accurately represent scheduling solutions. Moreover, a memory-enhanced heterogeneous graph neural network (MHGNN) is designed for feature extraction, leveraging historical trajectories to enhance the decision-making capability of the policy network. Finally, a parallel greedy search strategy is adopted to explore the solution space, enabling superior solutions with fewer iterations. Extensive experiments on synthetic data and public benchmarks demonstrate that \textit{MIStar} significantly outperforms both traditional handcrafted improvement heuristics and state-of-the-art DRL-based constructive methods.
\end{abstract}

\section{Introduction}
As an emerging paradigm integrating advanced manufacturing and digital technologies, Industry~4.0 is transforming production toward dynamic, large-scale, and customized manufacturing~\cite{1,71}. The job-shop scheduling problem (JSP)~\cite{80}, a classic NP-hard combinatorial optimization problem~(COP), has been widely studied and plays a critical role in manufacturing systems~\cite{81}. As an extension of JSP~\cite{73, 74}, the flexible job-shop scheduling problem (FJSP) allows each operation to be assigned to one of several eligible machines~\cite{5}, making it more suitable in handling the flexibility and diversity of task--resource relations in new manufacturing paradigms~\cite{43}. While this increased flexibility enables FJSP to better meet the demands of mass customization and automation in smart manufacturing (SM)~\cite{73,78,79}, it also introduces greater challenges in developing effective solution strategies~\cite{6}.

Recent advances in neural models have shown remarkable effectiveness in solving complex COPs~\cite{22,84,85,86,87,88,89}. As one promising paradigm among them, Deep Reinforcement Learning (DRL) formulates the scheduling task as a Markov decision process (MDP), learning a parameterized policy network that receives scheduling state as input and outputs feasible actions. DRL approaches to scheduling can be categorized into \textit{construction} and \textit{improvement}~\cite{24, 75}. \textit{Construction} methods build schedules by assigning an operation to a machine at each step, progressively extending partial solutions to complete ones~\cite{30}. However, their performance heavily relies on state representations~\cite{25}. Many studies~\cite{26,27,28,29} model scheduling states using disjunctive graphs, but the incompleteness of partial solutions often leads to the omission of important components~\cite{30} (e.g., the disjunctive arcs among undispatched operations~\cite{26}). Additionally, disjunctive graphs struggle to incorporate essential work-in-progress information required during construction~\cite{30} (e.g., current machine load and job status). These limitations lead to suboptimal schedules and reduce their adaptability in SM. In contrast, \textit{improvement} methods start from a complete initial solution and iteratively refine it through small adjustments~\cite{31}. As the MDP state represents a fully scheduled solution, it avoids the information loss inherent in partial ones. Moreover, the structure of disjunctive graphs naturally encodes topological relationships among operations~\cite{55}, enabling complete schedules to be effectively represented with all necessary information~\cite{30}. This better supports the high-performance demands of complex scheduling scenarios in SM.

Despite the promising results, most existing DRL-based improvement methods focus only on nonflexible problems, such as the JSP~\cite{30,45}. The increased complexity of FJSP over JSP poses significant challenges for designing effective improvement approaches. First, the one-to-many relationships between operations and machines make the scheduling state more intricate~\cite{43}, challenging its representation and demanding more expressive neural networks for encoding. Second, improvement methods explore solutions through local moves within neighborhoods. In FJSP, due to the complex decisions involving both operation sequencing and machine assignment~\cite{74}, this process requires tailored local moves as actions in the MDP~\cite{31}. Moreover, the existence of flexibility in FJSP significantly enlarges the solution space, increasing the risk of local optima entrapment and necessitating more efficient search strategies to reduce the extensive number of iterations for convergence~\cite{31, 77}.

To address the above challenges, we propose a DRL-based improvement heuristic framework for FJSP. First, to represent complex scheduling states, we design a novel heterogeneous disjunctive graph by adding machine nodes with directed hyper-edges to encode the complete solution, effectively capturing critical information about operation sequencing on machines. We also propose a heterogeneous graph neural network and enhance its exploratory capability via a memory module, which stores compact representations of visited solutions and retrieves relevant information to enrich the current state embedding. This module leverages historical schedules to improve decision-making and alleviate local optima issues. Second, we construct the action space based on the Nopt2 neighborhood structure, enabling simultaneous adjustment to operation sequences and machine assignments, and further reduce its dimensionality via constraint relationships to enhance search efficiency. Finally, we propose a parallel greedy exploration strategy that evaluates multiple candidate actions at each step, achieving comparable solution quality with fewer iterations. Extensive experiments on synthetic data and public benchmarks demonstrate the superior performance of our approach in solving FJSP.

The contributions of this paper are summarized as follows: 1) The first DRL-based improvement heuristic framework for FJSP---\textit{MIStar}, capable of learning size-agnostic policies that outperform both traditional handcrafted improvement and state-of-the-art DRL construction methods. 2) An MDP formulation with a heterogeneous disjunctive graph for state representation of complete scheduling solutions in FJSP, and an action space that enables simultaneous adjustments to operation sequences and machine assignments. 3) A memory-enhanced heterogeneous graph neural network~(MHGNN) that leverages historical solutions to improve decision-making and alleviate local optima issues. 4) A parallel greedy exploration strategy that improves efficiency of solution space exploration.

\section{Related work}
\label{gen_inst}
This section reviews the limitations of DRL-based construction methods and recent advances in improvement methods, highlighting the challenges in solving FJSP.

\textit{Construction} methods construct complete solutions by assigning an operation to a machine at each step. Compared to the heuristic guidance offered by conventional dispatching rules, DRL employs DNNs to score actions, but their effectiveness heavily relies on state representations~\cite{25,82,83}. Early approaches~\cite{37, 38, 39, 40} extract multiple general state features as input, which overcompresses state information and neglects the structural nature of FJSP~\cite{36}. Recent works~\cite{43, 26, 27, 41, 44} integrate GNNs with DRL by modeling scheduling states as graphs, learning rich graph embeddings with structural information for decision-making. However, the incompleteness of partial solutions often results in the omission of important components~\cite{30} (e.g., the disjunctive arcs among undispatched operations~\cite{26}), and work-in-progress information required during construction is hard to explicitly represent in disjunctive graphs~\cite{30}, leading to suboptimal performance.

\textit{Improvement} methods iteratively refine solutions by performing local moves within the neighborhood. Conventional methods use handcrafted rules to greedily select the best solution at each step, but suffer from myopia and costly neighborhood evaluation~\cite{30}. Recent studies~\cite{30,45,47,49} have addressed these limitations by leveraging deep policy networks in DRL to directly generate local moves. Such DRL-based improvement heuristics were initially applied to routing problems~\cite{32,33,34} and have since been adapted to scheduling domains. By representing complete solutions as MDP states through disjunctive graphs, these approaches circumvent the inaccuracies in state representation that arise from partial solutions~\cite{30}. Falkner et al.~\cite{47} train a GNN-based JSP solver using DRL to adapt local search strategies according to problem-specific features. However, their design of local operators relies heavily on expertise, limiting the generalizability. Closer to our work, Zhang et al.~\cite{30, 45} represent JSP solutions with disjunctive graphs and define the action space as a set of exchangeable operation pairs based on the N5 neighborhood structure~\cite{16}, training a GNN-based policy network via DRL. While effective for JSP, extending this framework to FJSP---whose flexible machine assignments better suit mass customization and dynamic production in Industry~4.0~\cite{78,79}---faces critical challenges. First, conventional disjunctive graphs lack machine nodes, making them insufficient for complex states of FJSP. In addition, the N5-based action design does not account for machine assignment adjustments(See Appendix~\ref{app:neighbor} for analysis). Furthermore, the increased flexibility of FJSP over JSP enlarges the solution space and aggravates local optima issues in local search~\cite{50}.

An effective approach to this issue is leveraging memory mechanisms by incorporating past experience to promote exploration~\cite{51}. Garmendia et al.~\cite{53} maintain a tabu memory to filter redundant visits in routing problems, yet such methods fail to integrate historical data into the decision-making of the policy, leaving historical information underutilized. A closely related work is MARCO~\cite{51}, which aggregates contextually relevant information from the memory module to enhance state embeddings and improve policy decisions. However, its application in improvement methods is limited to simple binary optimization problems (i.e., maximum cut and maximum independent set), and its approach to information aggregation is unsuitable for the complex constraints in FJSP. In contrast, we simplify scheduling representations, which are stored in the memory module and aggregated via a soft voting mechanism~\cite{35} to better accommodate the intricate constraints of FJSP.
\begin{figure}[htbp]
\vspace{-0.45cm}
    \centering
    \begin{subfigure}[b]{0.32\textwidth}
        \includegraphics[width=\linewidth]{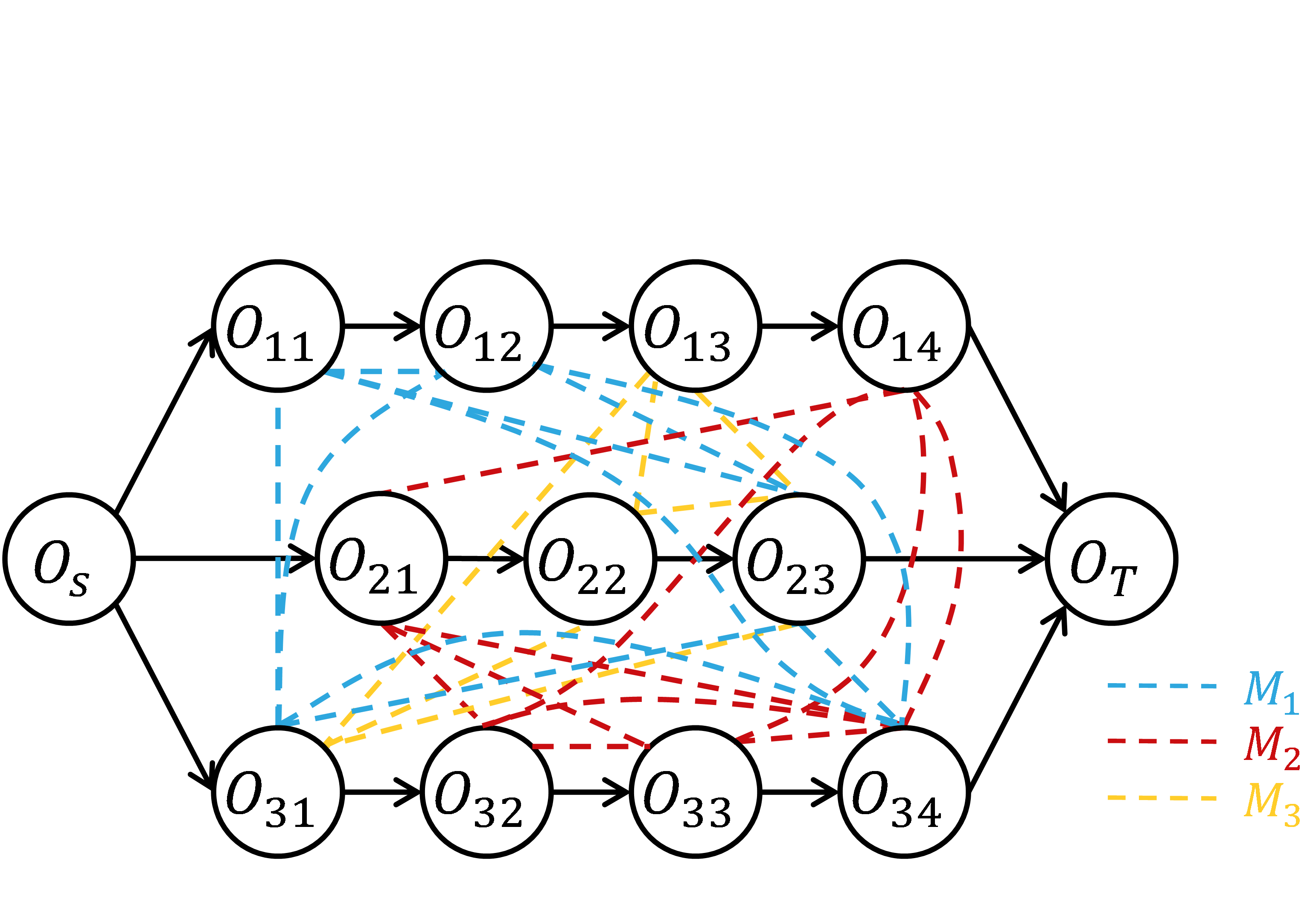}
        \caption{}\label{fig:1a}
    \end{subfigure}
    \begin{subfigure}[b]{0.32\textwidth}
        \includegraphics[width=\linewidth]{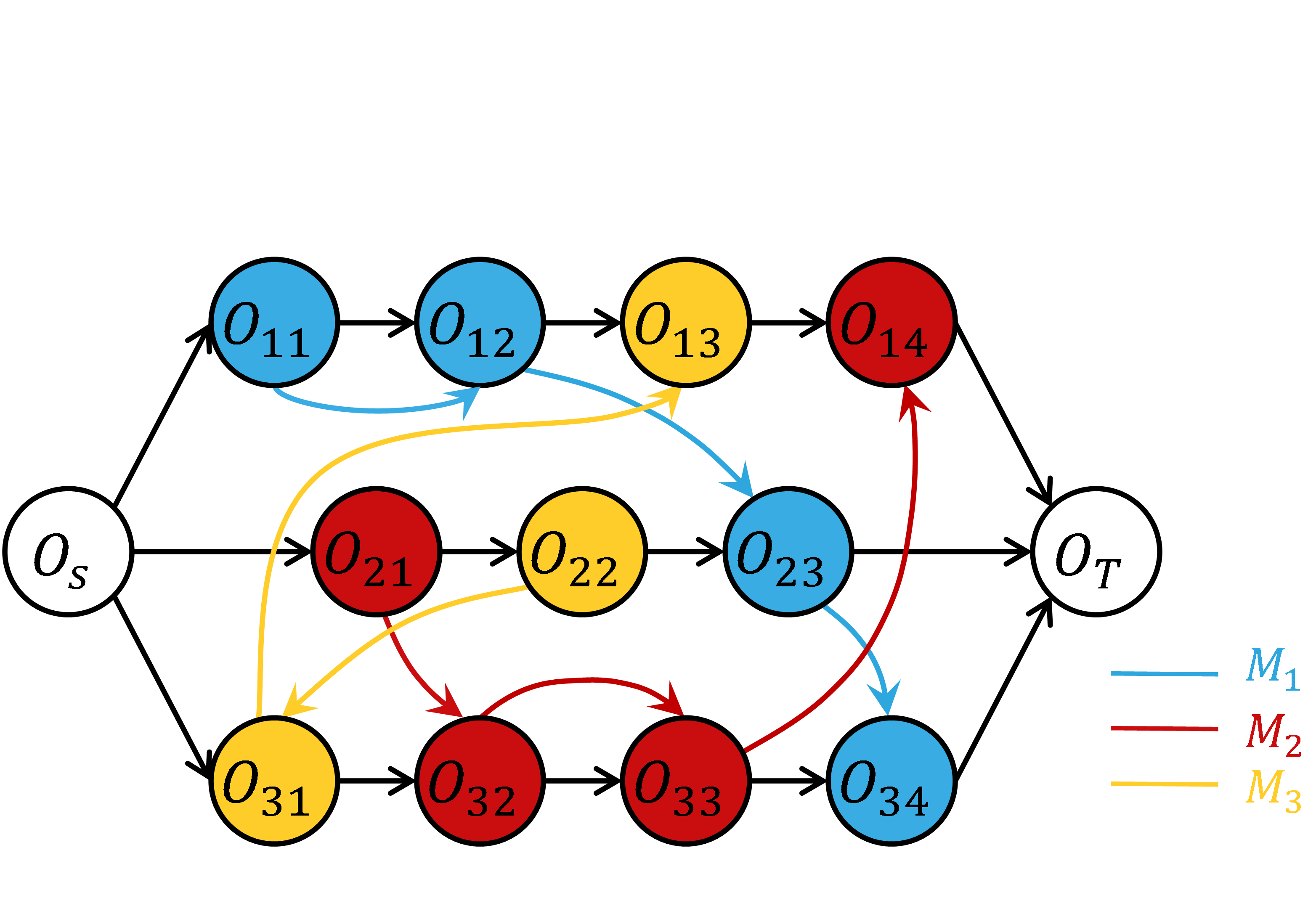}
        \caption{}\label{fig:1b}
    \end{subfigure}
    \begin{subfigure}[b]{0.32\textwidth}
        \includegraphics[width=\linewidth]{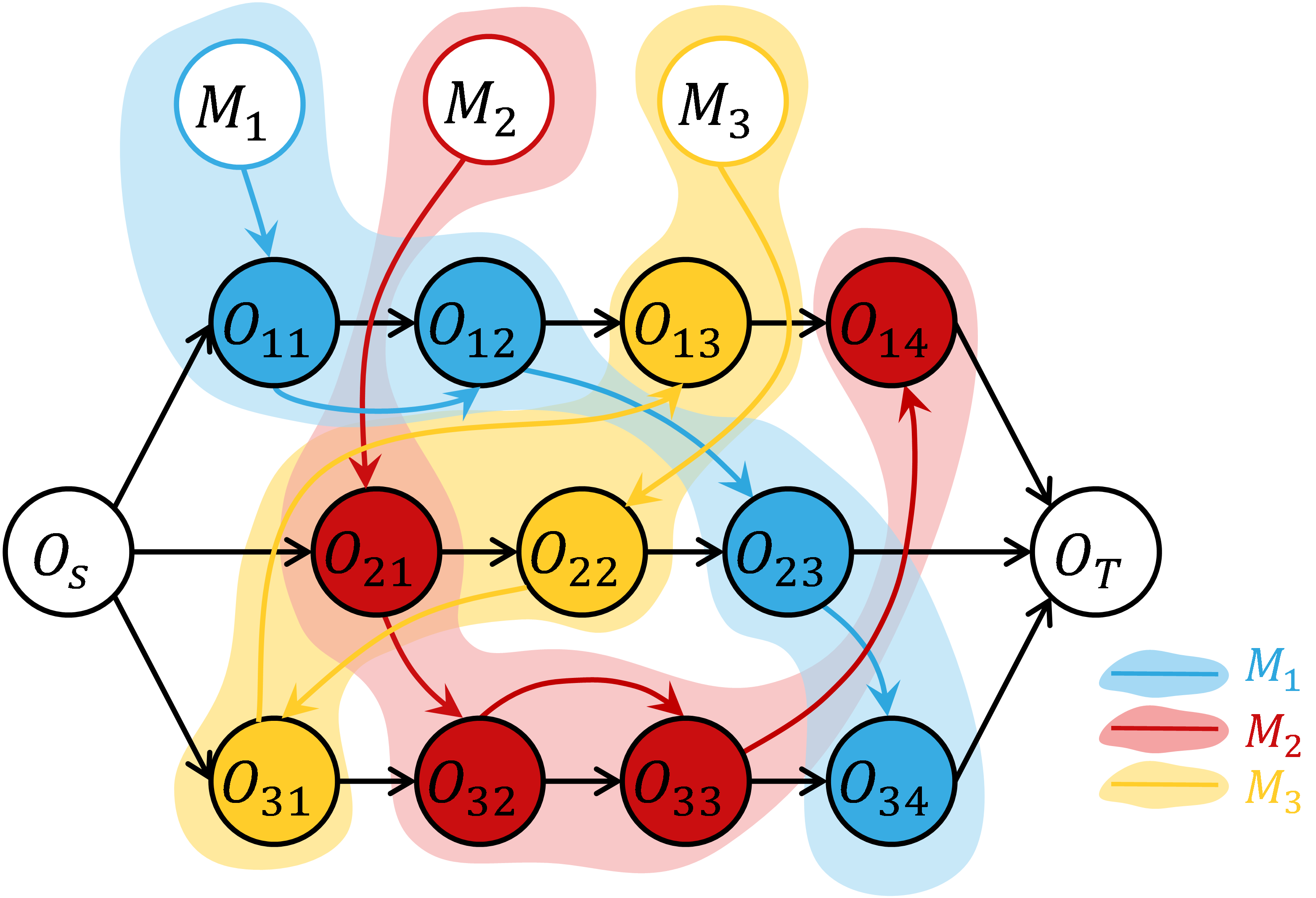}
        \caption{}\label{fig:1c}
    \end{subfigure}
    \caption{\textbf{Graph representations of FJSP.} (a) FJSP instance in a disjunctive graph; (b) feasible solution in a disjunctive graph; (c) feasible solution in our graph.}
    \label{fig:solution}
\end{figure}
\vspace{-0.55cm}

\section{Preliminaries}
\label{gen_inst}
This section presents the formulation of FJSP, disjunctive graphs, and Nopt2 neighborhood structure.
\paragraph{Flexible job shop scheduling problem.}
FJSP is defined as follows: Given a set of jobs \( \mathcal{J} = \{J_1, J_2, \cdots, J_n\} \) and a set of machines \( \mathcal{M} = \{M_1, M_2, \cdots, M_m\} \), each job \( J_i \) consists of \( n_i \) operations denoted by \( \mathcal{O}_i = \{O_{i_1}, O_{i_2}, \cdots, O_{in_i}\} \). The operations within a job must be processed in a specific order, known as precedence constraints. Each operation \( O_{ij} \) can be processed by any machine \( M_k \) for a processing time \( p_{ij}^k \) from its compatible set \( \mathcal{M}_{ij} \subseteq \mathcal{M} \), and the processing is non-preemptive. Each machine can process only one operation at a time. The objective of FJSP is to assign each operation to a compatible machine and determine its processing sequence on the selected machine to minimize the makespan, defined as the maximum completion time \( C_{\max} = \max_{i,j} \{C_{ij}\} \), where \( C_{ij} \) denotes the completion time of \( O_{ij} \).

\paragraph{Disjunctive graph.}
An FJSP instance can be represented by a disjunctive graph \( G = (\mathcal{O}, \mathcal{C}, \mathcal{D}) \)~\cite{54}. The operation set \( \mathcal{O} = \{O_{ij} \mid \forall i,j\} \cup \{O_S, O_T\} \) includes all operation nodes and two dummy ones representing the start and end states. The conjunctive arc set \( \mathcal{C} \) consists of directed arcs that describe the precedence constraints between operations of the same job. The disjunctive arc set \( \mathcal{D} = \cup_k \mathcal{D}_k \) comprises undirected edges, and each \( \mathcal{D}_k \) connects operations processed on the same machine \( M_k \). In FJSP, each operation can be connected to multiple disjunctive arcs. Therefore, solving an FJSP instance is equivalent to selecting a disjunctive arc for each node and fixing its direction to construct a DAG~\cite{55}. In the graph, the longest path from \( O_S \) to \( O_T \) is called the critical path (CP), whose length represents the makespan of the solution. See Figure~\ref{fig:1a} and~\ref{fig:1b} for examples.

\paragraph{The Nopt2 neighbourhood Structure.}
\label{para:nopt2} 
\begin{wrapfigure}{r}{0.5\textwidth} 
    \centering
    \includegraphics[width=\linewidth]{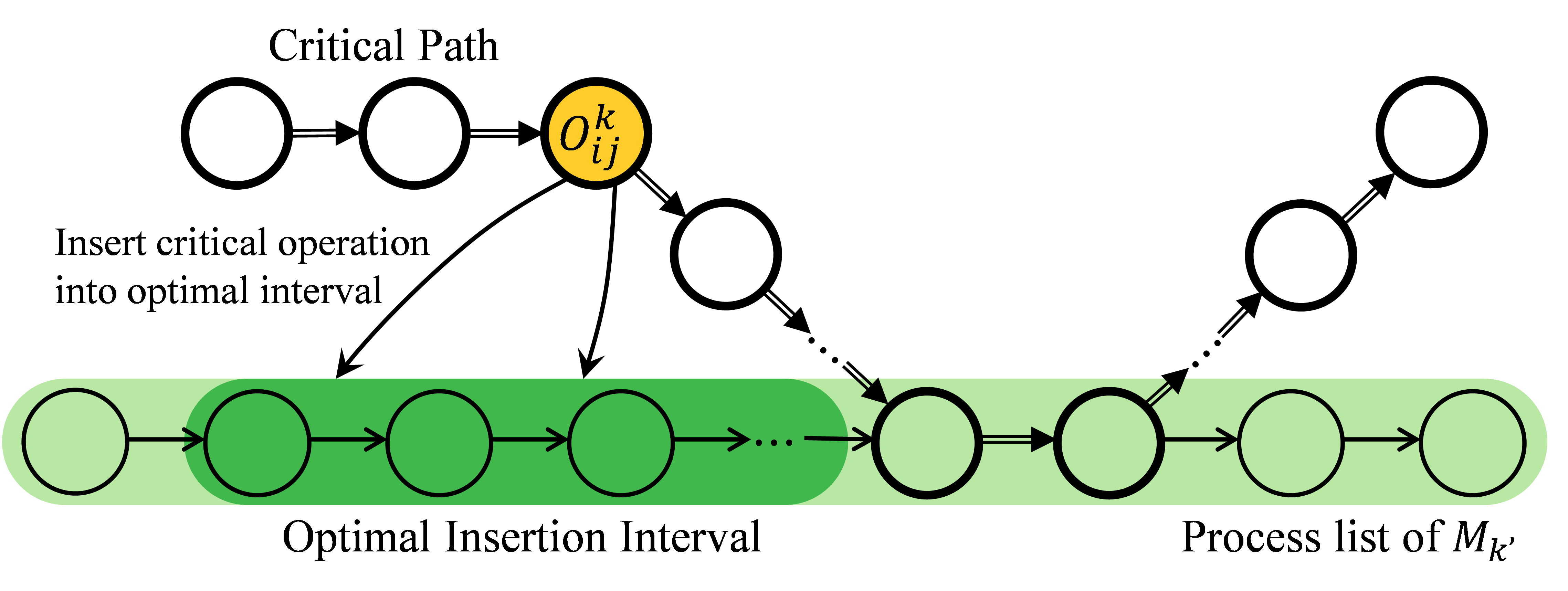}
    \caption{\textbf{Nopt2 neighborhood structure.}}
    \label{fig:nopt2}
\end{wrapfigure}
Given a solution \( s\), the Nopt2 constructs the neighbourhood \( Nopt2(s) \) as follows. First, the critical path CP(s) is identified. Then, an operation \( O_{ij}^k \) on the critical path is chosen and deleted from its current processing sequence of \( M_k \). It is reinserted into the optimal insertion interval within the sequence of an alternative compatible machine \( M_{k'}\) yielding a new solution \( s' \). The optimal insertion interval is determined based on the precedence relations between the newly inserted operation and existing operations on \( M_{k'} \), ensuring that all constraints are satisfied and the makespan of \( s' \) is potentially reduced. We adopt the Nopt2 neighborhood structure~\cite{56} (Figure~\ref{fig:nopt2}) to define the local move during the search process of \textit{MIStar}. By identifying optimal insertion interval, this structure significantly reduces the size of neighborhood. The neighborhood size \( |Nopt2(s)| \) is influenced by both the instance scale and the flexibility of machine selection.

\section{Methodology}
\label{gen_inst}
This section details our framework, illustrated in Figure~\ref{fig:workflow}. Given an FJSP instance, an initial solution is generated and transformed into a heterogeneous graph, from which a memory-enhanced GNN learns embeddings. The policy network samples several local moves from the neighborhood and evaluates in parallel. The best one is executed to update the current solution, yielding a new one. The process terminates after a predefined search horizon. We formulate the iterative solution optimization as a MDP. This section first formalizes the MDP, then presents a novel heterogeneous graph representation and the memory-enhanced GNN-based policy network. Finally, we describe the parallel greedy exploration strategy and the training algorithm.
\begin{figure}[htbp]
\vspace{-0.3cm}
    \centering
    \includegraphics[width=0.9\linewidth]{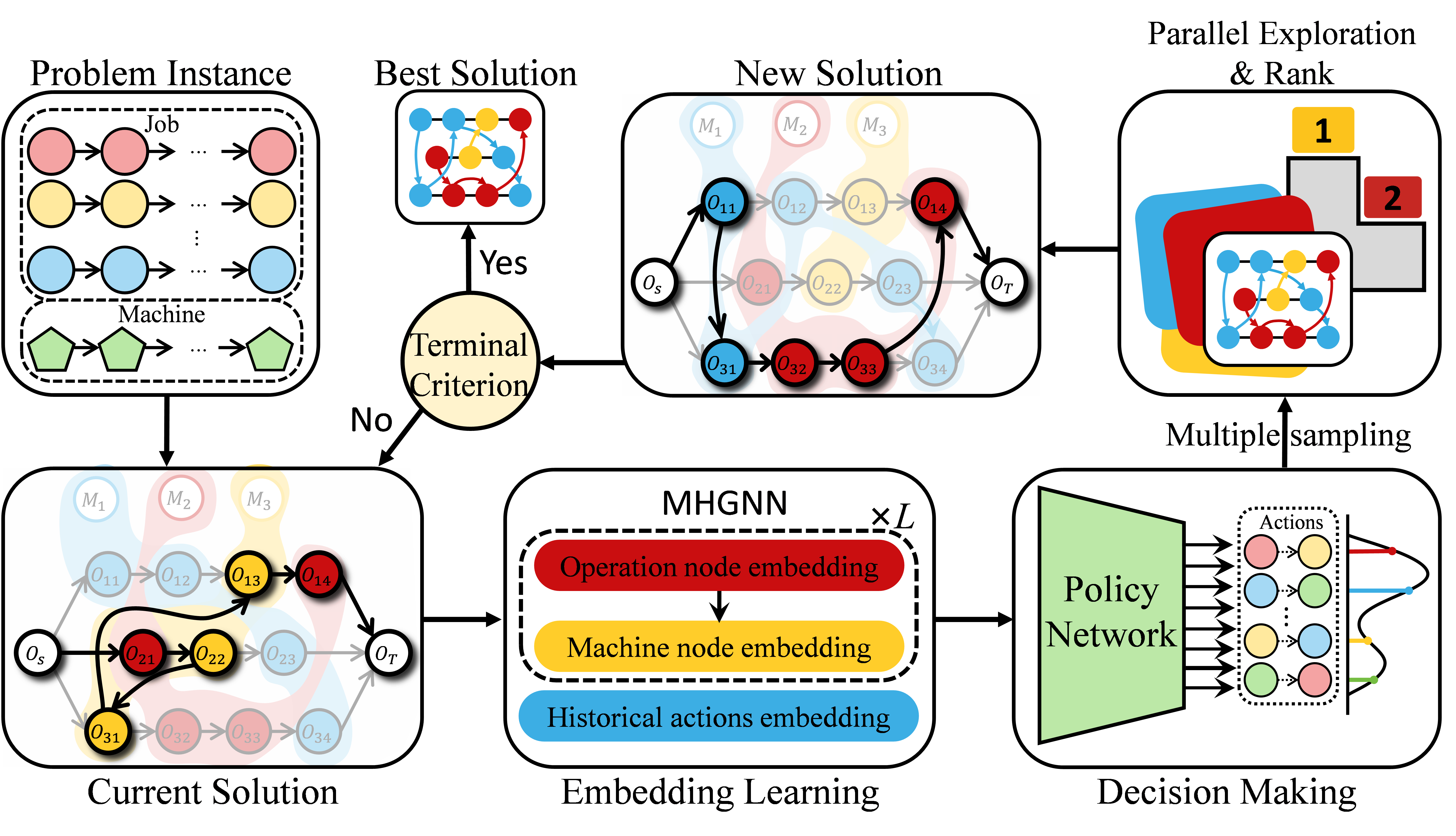}
    \captionsetup{skip=9pt}
    \caption{\textbf{The architecture of \textit{MIStar}.}}
    \label{fig:workflow}
\vspace{-0.45cm}
\end{figure}

\subsection{MDP Formulation}
We formulate the iterative optimization process of the FJSP as a discrete MDP. At each step \( t \), the agent, the decision-maker based on a given policy, selects an action (local move) from the action space (neighborhood), and the state transitions accordingly to represent a new solution. The MDP is defined as follows.

\textbf{State.} The state \( s_t \) encodes a complete solution and is defined by a set of feature vectors, including \( \mathbf{x}_{o_{ij}} \in \mathbb{R}^9 \) for each operation \( O_{ij} \) and \( \mathbf{y}_{M_k} \in \mathbb{R}^4 \) for each machine \( M_k \), detailed in Appendix~\ref{app:features}. \textbf{Action.} The action space $\mathcal{A}_t$ comprises local moves in the Nopt2 neighborhood, changing with the state $s_t$. Each action $a_t = [O_m, O_n, M_k]$ denotes inserting critical operation $O_m$ before operation $O_n$ in the processing sequence of machine $M_k$, where $O_m \notin \{O_S, O_T\}$ and $O_n \ne O_S$. \quad\quad\quad\quad\quad\quad\quad \textbf{Reward.} The reward comprises two components: improvement in solution quality \( r_{gain} \) and penalty for redundant solution visits \( r_{penalty} \). The term \( r_{gain} \) measures the improvement over the incumbent best solution \( s_t^* \) (initialized as \( s_0 \)), and is defined as: \(
r_{gain} = \max(C_{\max}(s_t^*) - C_{\max}(s_{t+1}), 0)
\), where maximizing the cumulative reward is equivalent to minimizing the makespan \( C_{\max}(s_t^*)\). The term \( r_{penalty} \) discourages redundant exploration by evaluating similarity between \( s_t \) and previous states. Formally, it is computed as:
\(
r_{penalty} = \lambda \times \frac{1}{K} \sum_{\omega \in \Omega_{\text{TopK}}} \omega
\), where \( \Omega_{\text{TopK}} \) denotes the top-$K$ similarity scores between \( s_t \) and stored states, and \( \lambda \) is the penalty coefficient. The overall reward is defined as \( r_t(s_t, a_t) = r_{gain} - r_{penalty} \), which is initially dominated by makespan improvement and gradually shifts toward solution diversity as improvements diminish.

\subsection{Directed Heterogeneous Graph}
Conventional disjunctive graphs lack machine nodes, hindering GNNs from accurately capturing machine states in FJSP, which in turn limits policies to effectively maximize machine utilization~\cite{36}. Inspired by the heterogeneous disjunctive graph $\mathcal{H}$~\cite{43}, we introduce a novel directed heterogeneous disjunctive graph $\overrightarrow{\mathcal{H}}=(\mathcal{O},\mathcal{M},\mathcal{C},\mathcal{E})$ to represent scheduling solutions. In our graph, the operation and machine nodes are denoted as $\mathcal{O}$ and $\mathcal{M}$, while the conjunctive arc set $\mathcal{C}$ and hyper-edge set $\mathcal{E}$ represent the processing sequences on jobs and machines, respectively. Mathematically, the hyper-edge is a special edge that can join any number of vertices \cite{66}. Since machine assignment and processing sequence are fixed in FJSP solutions, we reinterpret $\mathcal{E}$ as a directed hyper-arc set of size $|\mathcal{M}|$, i.e., $\mathcal{E}=\{E^{k}=(M_{k},O^{k1},O^{k2},\ldots,O^{kn_{k}})\}$, where $n_{k}$ is the number of operations processed on $M_{k}$. 
As shown in Figure~\ref{fig:1c}, each machine nodes $M_{k}$ connects to its processing sequence via directed hyper-arcs $E^{k}$ to explicitly encode machine status, and the unified directed arcs in $\overrightarrow{\mathcal{H}}$ enable clearer delineation and distinction of different solutions. See Appendix~\ref{app:directedgraph} for analysis.

\subsection{Memory-enhanced Heterogeneous Graph Neural Network}
We propose a memory-enhanced heterogeneous graph neural network (MHGNN) to extract state features from scheduling solutions. As shown in Figure~\ref{fig:network}, MHGNN encodes the topological and sequential constraint information to generate operation embeddings, which are aggregated into machine nodes to obtain machine embeddings. Meanwhile, relevant historical information is retrieved and extracted as historical action embeddings, which are then concatenated with machine and operation embeddings before being fed into the policy network to produce an action distribution. The following sections detail the embeddings design for operations, machines, and historical information.

\subsubsection{Operation Node Embedding}
Following~\cite{30}, we focus on two key aspects of the FJSP graph: (1) the topology dynamically changes due to the transitions of MDP state, and (2) rich semantics are conveyed through two types of neighbors---job predecessors and machine predecessors---reflecting precedence constraints and processing order on machines via conjunctive arcs and hyper-arcs, respectively. Their joint representation is crucial for distinguishing solutions, forming the foundation for scheduling decisions.

To encode the topological information, we employ the Graph Isomorphism Network (GIN)~\cite{57}, which is known for its strong discriminative power for non-isomorphic graphs. Specifically, given a graph $\overrightarrow{\mathcal{H}}=(\mathcal{O},\mathcal{M},\mathcal{C},\mathcal{E})$, each operation $O_{ij} \in \mathcal{O}$ is encoded into a $q$-dimensional embedding through an $L$-layer GIN, where the $l$-th layer computes as follows:
\vspace{-5pt}
\begin{equation}
\mu^l_{O_{ij}} = \text{MLP}^l \left( (1 + \epsilon^l) \cdot \mu^{l-1}_{O_{ij}} + \sum_{U \in N(O_{ij})} \mu^{l-1}_{U} \right)
\label{eq:ope1}
\end{equation}
where $\mu^l_{O_{ij}} \in \mathbb{R}^q$ denotes the topological embedding of operation $O_{ij}$ at layer $l$, initialized with its raw features \( \mathbf{x}_{o_{ij}} \), $\epsilon^l$ is a learnable parameter, and $N(O_{ij})$ comprises predecessor operation nodes. 

To capture rich semantic relationships among operations, we apply an \( L \)-layer Graph Attention Network (GAT) with \( n_H \) attention heads, with the feature transformation at layer \( l \) computed as:
\vspace{-3pt}
\begin{equation}
\tau_{O_{ij}}^l = \text{GAT}^l \left( \tau_{O_{ij}}^{l-1}, \{ \tau_{U}^{l-1} | U \in N(O_{ij}) \} \right)
\label{eq:ope2}
\end{equation}
where \( \tau_{O_{ij}}^l \in \mathbb{R}^q \) denotes the semantic embedding of operation \( O_{ij} \) at layer \( l \), initialized with raw features \( \mathbf{x}_{o_{ij}} \). We obtain the operation embedding \( h_{O_{ij}}^l \in \mathbb{R}^{2q} \) at layer \( l \) by concatenating the above two embeddingshe final-layer outputs form the set of operation node embeddings \( \{ h_{O_{ij}} = h_{O_{ij}}^L \mid \forall O_{ij} \in \mathcal{O} \} \).
\begin{figure}[htbp]
\vspace{-0.1cm}
    \centering
    \includegraphics[width=0.95\linewidth]{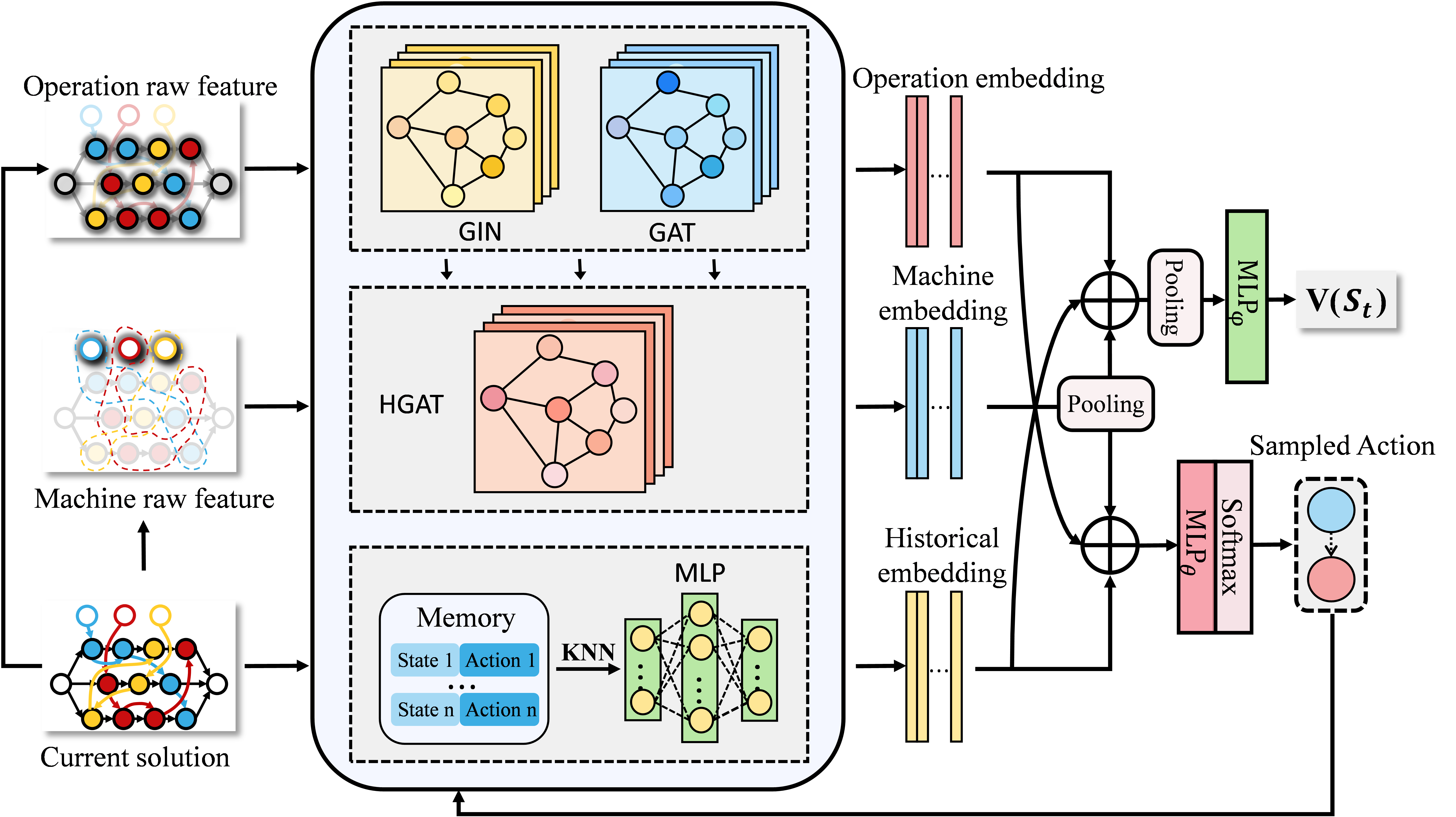}
    \captionsetup{skip=9pt}
    \caption{\textbf{The network architecture of the memory-enhanced heterogeneou GNN.}}
    \label{fig:network}
\vspace{-0.1cm}
\end{figure}
\subsubsection{Machine Node Embedding}
Machine embeddings are incorporated to capture machine load by aggregating the information of their associated operation nodes, serving as an essential heuristic to guide the policy in reassigning operations from high-load machines to low-load counterparts~\cite{43}. For each machine \( M_k \), the hyper-arc \( E^k = (M_k, O^{k1}, O^{k2}, \ldots, O^{kn_k}) \) encodes its full processing sequence. Since the importance of operations at different positions varies, we employ an \( L \)-layer heterogeneous GAT~\cite{43} to extract machine node embeddings, where nodes with different attributes are processed by distinct linear transformations rather than a shared one when computing the attention coefficients. The machine node embedding at layer \( l \) is computed as follows:
\vspace{-4pt}
\begin{equation}
h_{M_k}^l = \text{HGAT}^l \left( h_{M_k}^{l-1}, \{ h_{U}^{l-1} | U \in N(M_k) \} \right)
\end{equation}
where \( h_{M_k}^l \in \mathbb{R}^q \) denotes machine embedding at layer \( l \), initialized with raw features \( \mathbf{y}_{M_k} \), HGAT\(^l\) is a heterogeneous graph attention layer, and $N(M_k)$ comprises operations in its processing sequence. Similarly, the graph-level machine embedding is computed as \( h_M = \frac{1}{|\mathcal{M}|} \sum_{M_k \in \mathcal{M}} h_{M_k}^L \).

\subsubsection{Historical Action Embedding}
At step \( t \), we store the state-action pair $(s_t, a_t)$ in a memory module. To reduce storage and computational costs, we simplify the representation of \( s_t \) by omitting node attributes and focusing solely on the processing sequence across machines (see analysis in Appendix~\ref{app:compact}). Specifically, we store the operation-machine incidence matrix $\bm{L}_t$ at each step, where the non-zero element $(\boldsymbol{L}_{t})_{i,j}$ represents the processing order index of operation $i$ on machine $j$, and assess the similarity between the current state $\bm{L}_t$ and each historical state $\bm{L}_{t'}$ ($t' < t$), by computing the Frobenius inner product as their similarity score $\omega_{t,t'} \in \Omega$, given by:
\vspace{-10pt}
\begin{equation}
\omega_{t,t'} = \langle L_t, L_{t'} \rangle_F = \sum_{i=1}^{|\mathcal{O}|} \sum_{j=1}^{|\mathcal{M}|} (L_t)_{i,j} \cdot (L_{t'})_{i,j}
\label{eq:similarity}
\end{equation}
The \( k \)-nearest neighbors algorithm retrieves \( K \) actions with the highest similarity scores, forming the set $\mathcal{A}_{L_t} = \{a_1, a_2, \ldots, a_K\}$. We aim to aggregate the information in $\mathcal{A}_{L_t}$ to enhance the state embedding and encourage exploration of more diverse solutions through the design of reward. However, each historical action \( a_i = [O_{m_i}, O_{n_i}, M_{k_i}] \) is denoted by discrete integer indices indicating operations and machines. Simply weighted averaging yields semantically invalid continuous values, failing to provide useful guidance. To address this, we adopt a soft voting mechanism that aggregates the three dimensions separately based on frequency-weighted similarity, as follows:
\vspace{-7pt}
\begin{equation}
\tilde{O}_m = \operatorname*{arg\,max}_{O_m \in \mathcal{A}_{L_t}^1} \sum_{i=1}^{K} \omega_{t,i} \cdot \operatorname{\mathbb{I}}(O_{m_i} = O_m)
\label{eq:aggregate_Om}
\end{equation}
\vspace{-6pt}
\begin{equation}
\tilde{O}_n = \operatorname*{arg\,max}_{O_n \in \mathcal{A}_{L_t}^2} \sum_{i=1}^{K} \omega_{t,i} \cdot \operatorname{\mathbb{I}}(O_{n_i} = O_n)
\label{eq:aggregate_On}
\end{equation}
\vspace{-5pt}
\begin{equation}
\tilde{M}_k = \operatorname*{arg\,max}_{M_k \in \mathcal{A}_{L_t}^3} \sum_{i=1}^{K} \omega_{t,i} \cdot \operatorname{\mathbb{I}}(M_{k_i} = M_k)
\label{eq:aggregate_Mk}
\end{equation}
where \( \omega_{t,i} \) is the similarity score between \( s_t \) and the \( i \)-th most similar historical state, serving as the weight of \( a_i \), $\operatorname{\mathbb{I}}(\cdot)$ is the indicator function, and \(\mathcal{A}_{L_t}^1\), \(\mathcal{A}_{L_t}^2\) and \(\mathcal{A}_{L_t}^3\) denote the unique value sets in each dimension of $\mathcal{A}_{L_t}$. The aggregated action \(\tilde{a}_{L_t} = [\tilde{O}_m, \tilde{O}_n, \tilde{M}_k]\) is then mapped by a single-layer MLP to the embedding space of \( h_O \) and \( h_M \), resulting in the historical action embedding \( h_a \in \mathbb{R}^q \).

\subsubsection{Decision Making}
Given that machine assignments are determined in FJSP solutions (i.e., for any action \([O_m, O_n, M_k]\), \( O_n \) must be processed by \( M_k \)), the third dimension can be omitted from the action space. We enhance each operation embedding \( h_{Oij} \) by concatenating \( h_M \) and \( h_a \), then feed them into the policy network to obtain intermediate vectors forming a matrix \( F_O \in \mathbb{R}^{|\mathcal{O}| \times q}\). We multiply \( F_O \) by its transpose to obtain a score matrix \( F_{OO} \) of shape \( |\mathcal{O}| \times |\mathcal{O}| \), where each element represents the priority score of an action in \( \mathcal{A}_t \). Based on the Nopt2 neighborhood, we mask infeasible actions by setting their scores to \( -\infty \) and apply softmax normalization to obtain the action probability distribution. By leveraging the FJSP constraints, we reduce the space complexity of \( \mathcal{A}_t \) from \( O(|\mathcal{O}|^2|\mathcal{M}|) \) to \( O(|\mathcal{O}|^2) \), which alleviates the difficulty of learning and accelerates model convergence. A detailed analysis of this space complexity reduction is provided in Appendix~\ref{app:complexity}.

\subsubsection{Training Algorithm}
We propose an $n$-step PPO algorithm with a parallel greedy exploration strategy to efficiently explore the large solution space of FJSP. At each step $t$, the policy samples $P$ candidate actions $\{a_t^1, a_t^2, \ldots, a_t^P\}$ from $A_t$, evaluates their improvements in parallel, and selects the action $a_t^i$ yielding the greatest makespan reduction for execution. This approach enables the exploration of $P$ solutions per iteration, producing high-quality solutions in fewer iterations and significantly reducing search time. More detailed analysis and pseudo-code of our training algorithm are presented in Appendix~\ref{app:training}.

\section{Experiments}
\label{gen_inst}
In this section, we present the experimental setup, performance evaluations on synthetic and public datasets as well as the results of ablation studies.
\vspace{-9pt}
\begin{table}[]
\centering
\caption{Results on small synthetic instances}
\label{tab:syn-small}
\resizebox{\textwidth}{!}{%
\setlength{\tabcolsep}{1pt}
\begin{threeparttable}
\renewcommand{\arraystretch}{1}
\large
\begin{tabular}{ccccclcccccccccccccccccc}
\hline
\multicolumn{3}{c}{\multirow{2}{*}{\textbf{Size}}} & \multicolumn{2}{c}{Sample} &  & \multicolumn{2}{c}{Greedy} &  & \multicolumn{4}{c}{100 Iterations} &  & \multicolumn{4}{c}{200 Iterations} &  & \multicolumn{4}{c}{400 Iterations} & \multirow{2}{*}{OR-Tools\tnote{1}} \\ \cline{4-5} \cline{7-8} \cline{10-13} \cline{15-18} \cline{20-23}
\multicolumn{3}{c}{} & DANIEL & HGNN &  & DANIEL & HGNN &  & GD & FI & BI & MIStar &  & GD & FI & BI & MIStar &  & GD & FI & BI & MIStar &  \\ \hline
\multirow{12}{*}{\rotatebox[origin=c]{90}{\Large  SD1}} & \multirow{3}{*}{$10 \times 5$} & Obj. & 101.67 & 105.59 &  & 106.76 & 111.67 &  & \textbf{100.00} & 100.91 & 100.55 & 100.02 &  & 100.00 & 100.91 & 100.54 & \textbf{99.88} &  & 100.00 & 100.91 & 100.53 & \textbf{99.69} & \multirow{3}{*}{96.32(5\%)} \\
 &  & Gap & 5.55\% & 9.62\% &  & 10.84\% & 15.94\% &  & \textbf{3.82}\% & 4.77\% & 4.39\% & 3.84\% &  & 3.82\% & 4.77\% & 4.38\% & \textbf{3.70\%} &  & 3.82\% & 4.77\% & 4.37\% & \textbf{3.50\%} &  \\
 &  & Time\tnote{2} & 0.26s & 0.44s &  & 0.16s & 0.17s &  & 8.13s & 12.60s & 12.09s & 5.64s &  & 16.43s & 27.54s & 26.21s & 11.23s &  & 32.95s & 58.71s & 54.73s & 23.00s &  \\ \cline{2-24} 
 & \multirow{3}{*}{$20 \times 5$} & Obj. & 192.78 & 207.53 &  & 197.56 & 211.22 &  & 191.31 & 191.79 & 191.36 & \textbf{190.87} &  & 191.31 & 191.79 & 191.36 & \textbf{190.79} &  & 191.31 & 191.79 & 191.36 & \textbf{190.67} & \multirow{3}{*}{188.15(0\%)} \\
 &  & Gap & 2.46\% & 10.30\% &  & 5.00\% & 12.26\% &  & 1.68\% & 1.93\% & 1.71\% & \textbf{1.45\%} &  & 1.68\% & 1.93\% & 1.71\% & \textbf{1.40\%} &  & 1.68\% & 1.93\% & 1.71\% & \textbf{1.34\%} &  \\
 &  & Time & 1.14s & 1.05s &  & 0.33s & 0.34s &  & 53.86s & 70.73s & 77.32s & 15.72s &  & 1.80m & 2.46m & 2.70m & 33.38s &  & 3.60m & 5.20m & 5.56m & 1.16m &  \\ \cline{2-24} 
 & \multirow{3}{*}{$15 \times 10$} & Obj. & 153.22 & 160.86 &  & 161.28 & 166.92 &  & 151.96 & 152.10 & 151.90 & \textbf{150.85} &  & 151.96 & 152.10 & 151.90 & \textbf{150.61} &  & 151.96 & 152.10 & 151.90 & \textbf{150.34} & \multirow{3}{*}{143.53(7\%)} \\
 &  & Gap & 6.75\% & 12.07\% &  & 12.37\% & 16.30\% &  & 5.87\% & 5.97\% & 5.83\% & \textbf{5.10\%} &  & 5.87\% & 5.97\% & 5.83\% & \textbf{4.93\%} &  & 5.87\% & 5.97\% & 5.83\% & \textbf{4.74\%} &  \\
 &  & Time & 1.90s & 2.17s &  & 0.50s & 0.50s &  & 58.62s & 86.43s & 90.21s & 27.09s &  & 1.97m & 3.08m & 3.11m & 54.22s &  & 3.93m & 6.81m & 6.33m & 1.80m &  \\ \cline{2-24} 
 & \multirow{3}{*}{$20 \times 10$} & Obj. & 193.91 & 214.81 &  & 198.50 & 215.78 &  & 192.92 & 193.26 & 193.03 & \textbf{192.71} &  & 192.92 & 193.26 & 193.03 & \textbf{192.70} &  & 192.92 & 193.26 & 193.03 & \textbf{192.68} & \multirow{3}{*}{195.98(0\%)} \\
 &  & Gap & -1.06\% & 9.61\% &  & 1.29\% & 10.10\% &  & -1.56\% & -1.39\% & -1.51\% & \textbf{-1.67\%} &  & -1.56\% & -1.39\% & -1.51\% & \textbf{-1.67\%} &  & -1.56\% & -1.39\% & -1.51\% & \textbf{-1.68\%} &  \\
 &  & Time & 2.71s & 2.93s &  & 0.68s & 0.69s &  & 2.44m & 2.90m & 3.09m & 32.91s &  & 4.90m & 6.06m & 6.64m & 1.14m &  & 9.86m & 12.83m & 13.87m & 2.51m &  \\  \hline
\multirow{12}{*}{\rotatebox[origin=c]{90}{\Large  SD2}} & \multirow{3}{*}{$10 \times 5$} & Obj. & 365.26 & 479.37 &  & 413.73 & 552.80 &  & 360.17 & 354.22 & 349.66 & \textbf{345.05} &  & 360.17 & 354.22 & 349.66 & \textbf{343.43} &  & 360.17 & 354.22 & 349.66 & \textbf{342.50} & \multirow{3}{*}{326.24(96\%)} \\
 &  & Gap & 11.96\% & 46.94\% &  & 26.82\% & 69.45\% &  & 10.40\% & 8.58\% & 7.18\% & \textbf{5.77\%} &  & 10.40\% & 8.58\% & 7.18\% & \textbf{5.27\%} &  & 10.40\% & 8.58\% & 7.18\% & \textbf{4.98\%} &  \\
 &  & Time & 0.30s & 0.22s &  & 0.16s & 0.20s &  & 6.06s & 10.37s & 11.74s & 5.27s &  & 12.21s & 22.48s & 25.65s & 10.33s &  & 24.39s & 48.51s & 54.57s & 21.39s &  \\ \cline{2-24} 
 & \multirow{3}{*}{$20 \times 5$} & Obj. & 629.56 & 960.11 &  & 671.38 & 1047.83 &  & 626.35 & 622.75 & 619.13 & \textbf{615.12} &  & 626.35 & 622.75 & 619.13 & \textbf{613.75} &  & 626.35 & 622.75 & 619.13 & \textbf{612.73} & \multirow{3}{*}{602.04(0\%)} \\
 &  & Gap & 4.57\% & 59.48\% &  & 11.52\% & 74.05\% &  & 4.04\% & 3.44\% & 2.84\% & \textbf{2.17\%} &  & 4.04\% & 3.44\% & 2.84\% & \textbf{1.95\%} &  & 4.04\% & 3.44\% & 2.84\% & \textbf{1.78\%} &  \\
 &  & Time & 0.78s & 0.77s &  & 0.33s & 0.35s &  & 48.48s & 62.37s & 72.23s & 11.55s &  & 1.61m & 2.18m & 2.52m & 23.50s &  & 3.30m & 4.71m & 5.21m & 47.93s &  \\ \cline{2-24} 
 & \multirow{3}{*}{$15 \times 10$} & Obj. & 522.51 & 757.18 &  & 588.72 & 824.62 &  & 516.79 & 511.00 & 514.73 & \textbf{488.22} &  & 516.79 & 511.00 & 514.73 & \textbf{476.93} &  & 516.79 & 511.00 & 514.73 & \textbf{465.71} & \multirow{3}{*}{377.17(28\%)} \\
 &  & Gap & 38.53\% & 100.75\% &  & 56.09\% & 118.63\% &  & 37.02\% & 35.48\% & 36.47\% & \textbf{29.44\%} &  & 37.02\% & 35.48\% & 36.47\% & \textbf{26.45\%} &  & 37.02\% & 35.48\% & 36.47\% & \textbf{23.47\%} &  \\
 &  & Time & 1.72s & 1.45s &  & 0.50s & 0.57s &  & 60.34s & 89.27s & 78.09s & 18.09s &  & 2.07m & 3.17m & 2.64m & 37.42s &  & 4.19m & 6.87m & 5.30m & 1.31m &  \\ \cline{2-24} 
 & \multirow{3}{*}{$20 \times 10$} & Obj. & 552.38 & 987.57 &  & 605.37 & 1036.65 &  & 549.30 & 542.39 & 535.06 & \textbf{530.93} &  & 549.30 & 542.39 & 535.06 & \textbf{528.92} &  & 549.30 & 542.39 & 535.06 & \textbf{525.49} & \multirow{3}{*}{464.16(1\%)} \\
 &  & Gap & 19.01\% & 112.76\% &  & 30.42\% & 123.34\% &  & 18.34\% & 16.85\% & 15.27\% & \textbf{14.39\%} &  & 18.34\% & 16.85\% & 15.27\% & \textbf{13.95\%} &  & 18.34\% & 16.85\% & 15.27\% & \textbf{13.21\%} &  \\
 &  & Time & 2.84s & 2.91s &  & 0.69s & 0.71s &  & 2.35m & 3.06m & 3.53m & 25.76s &  & 4.64m & 7.07m & 7.54m & 53.25s &  & 9.14m & 15.33m & 15.76m & 1.88m &  \\  \hline
\end{tabular}%
\begin{tablenotes}   
        \footnotesize              
        \item[1] For OR-Tools, the solution and the ratio of optimally solved instances are reported.
        \item[2] “s”, “m”, and “h” denote seconds, minutes, and hours, respectively.
\end{tablenotes}
\end{threeparttable}
}
\vspace{-13pt}
\end{table}

\subsection{Experimental Settings}
\paragraph{Datasets.}
We evaluate our method on both synthetic data and benchmarks. The synthetic datasets SD1~\cite{43} and SD2~\cite{25} cover 6 scales and 100 instances per scale, and differ in distribution. For each $n \times m$ FJSP instance, the number of compatible machines $|\mathcal{M}_{ij}|$ for $O_{ij}$ is sampled from $U(1, m)$. In SD1, the number of operations per job and the average processing time $\overline{p}_{ij}$ of $O_{ij}$ are sampled from $U(0.8 \times m, 1.2 \times m)$ and $U(1,20)$, respectively. The machine-specific processing time $p_{ij}^{k}$ is drawn from $U(0.8 \times \overline{p}_{ij}, 1.2 \times \overline{p}_{ij})$. While in SD2, each job has $m$ operations, and $p_{ij}^{k}$ is sampled from $U(1,99)$. Benchmarks include Hurink~\cite{59} (Edata, Rdata, Vdata; 40 instances per set over 8 scales) and Brandimarte~\cite{60} (10 instances across 7 scales). Implementation details are in Appendix~\ref{app:conifg}.

\paragraph{Baselines and Performance Metrics.}
We compare our method with two DRL-based construction methods, HGNN~\cite{43} and DANIEL~\cite{25}. HGNN is a heterogeneous GNN designed to encode features represented by a heterogeneous disjunctive graph, while DANIEL employs a dual-attention network on tight graph representation. Both models are retrained and tested, with results reported for sampling and greedy strategies. We also compare three hand-crafted improvement rules: greedy(GD)~\cite{50}, best-improvement (BI), and first-improvement (FI)~\cite{61}. Specifically, GD selects the solution with the smallest makespan from the neighborhood, while BI and FI choose the best and first improving ones, respectively. A \textit{restart} strategy~\cite{62} is applied in BI and FI to escape local optima by continuing the search from a new initial solution. We also compares with (near-)optimal solutions from Google OR-Tools, a powerful constraint programming solver, under a 1800\,s time limit. For benchmarks, we report the results of a two-stage genetic algorithm (2SGA)~\cite{63} directly from~\cite{25}. 

As such incremental search methods cannot quickly access very different solutions~\cite{31, 32}, their performance is sensitive to the initial solution quality. We use DANIEL to sample 100 initial solutions per instance, which are equally applied to three rule-based methods for fair comparison. \textit{MIStar} then performs parallel iterative search, retaining the best solution. Performance is measured by the average makespan and the relative gap to the best-known solution, which is obtained via OR-Tools for synthetic datasets~\cite{25} and from~\cite{64} for public benchmarks.

\subsection{Performance on Synthetic Instances}
We train our model with 100 iterations on four small sizes ($10\times5$, $20\times5$, $15\times10$, and $20\times10$) from two synthetic datasets and evaluate on corresponding test instances. Table~\ref{tab:syn-small} reports the performance and average runtime per instance across all methods. The generalization capability of the model, evaluated on large-sized instances ($30\times10$ and $40\times10$ in Table~\ref{tab:syn-larger}) and with extended iterations up to 400, is also reported. The results show that \textit{MIStar} consistently outperforms two construction methods across various instances, with larger gains on SD2. This is because initial solutions on SD1 are near-optimal, leaving limited room for improvement, as verified by experiments on varying-quality initial solutions (see Appendix~\ref{app:initS}). Compared with rule-based improvement heuristics, \textit{MIStar} achieves better solution quality with shorter runtime and maintains consistent improvement over longer iterations, verifying the effectiveness and efficiency of its policy. See detailed analysis in Appendix~\ref{app:synthetic}.

\begin{table}[]
\centering
\caption{Generalization results on large synthetic instances}
\label{tab:syn-larger}
\resizebox{\textwidth}{!}{%
\setlength{\tabcolsep}{1pt}
\begin{threeparttable}
\renewcommand{\arraystretch}{1.1}
\large
\begin{tabular}{ccccclcccccccccccccccccc}
\hline
\multicolumn{3}{c}{\multirow{2}{*}{\textbf{Size}}} & \multicolumn{2}{c}{Sample} &  & \multicolumn{2}{c}{Greedy} &  & \multicolumn{4}{c}{100 Iterations} &  & \multicolumn{4}{c}{200 Iterations} &  & \multicolumn{4}{c}{400 Iterations} & \multirow{2}{*}{OR-Tools\tnote{1}} \\ \cline{4-5} \cline{7-8} \cline{10-13} \cline{15-18} \cline{20-23}
\multicolumn{3}{c}{} & DANIEL & HGNN &  & DANIEL & HGNN &  & GD & FI & BI & MIStar &  & GD & FI & BI & MIStar &  & GD & FI & BI & MIStar &  \\ \hline
\multirow{6}{*}{\rotatebox[origin=c]{90}{\Large  SD1}} & \multirow{3}{*}{$30 \times 10$} & Obj. & 286.77 & 308.55 &  & 288.61 & 314.71 &  & 286.04 & 286.19 & 285.95 & \textbf{285.79} &  & 286.04 & 286.19 & 285.95 & \textbf{285.75} &  & 286.04 & 286.19 & 285.95 & \textbf{285.68} & \multirow{3}{*}{274.67(6\%)} \\
 &  & Gap & 4.41\% & 12.33\% &  & 5.08\% & 14.58\% &  & 4.14\% & 4.19\% & 4.11\% & \textbf{4.05\%} &  & 4.14\% & 4.19\% & 4.11\% & \textbf{4.03\%} &  & 4.14\% & 4.19\% & 4.11\% & \textbf{4.01\%} &  \\
 &  & Time\tnote{2} & 5.60s & 6.25s &  & 0.97s & 0.97s &  & 7.67m & 10.51m & 10.73m & 57.03s &  & 15.32m & 22.00m & 21.49m & 1.83m &  & 30.65m & 47.10m & 43.11m & 3.77m &  \\ \cline{2-24} 
 & \multirow{3}{*}{$40 \times 10$} & Obj. & 379.71 & 410.76 &  & 379.28 & 417.87 &  & 379.04 & 379.05 & 378.92 & \textbf{378.82} &  & 379.04 & 379.05 & 378.92 & \textbf{378.76} &  & 379.04 & 379.05 & 378.92 & \textbf{378.63} & \multirow{3}{*}{365.96(3\%)} \\
 &  & Gap & 3.76\% & 12.24\% &  & 3.64\% & 14.18\% &  & 3.57\% & 3.58\% & 3.54\% & \textbf{3.51\%} &  & 3.57\% & 3.58\% & 3.54\% & \textbf{3.50\%} &  & 3.57\% & 3.58\% & 3.54\% & \textbf{3.46\%} &  \\
 &  & Time & 9.94s & 10.45s &  & 1.30s & 1.34s &  & 18.20m & 25.13m & 24.04m & 1.11m &  & 36.48m & 52.01m & 48.46m & 2.55m &  & 1.22h & 1.82h & 1.63h & 5.94m &  \\ \hline
\multirow{6}{*}{\rotatebox[origin=c]{90}{\Large  SD2}} & \multirow{3}{*}{$30 \times 10$} & Obj. & 756.52 & 1453.40 &  & 800.02 & 1536.74 &  & 753.66 & 748.41 & 739.47 & \textbf{735.14} &  & 753.66 & 748.41 & 739.47 & \textbf{731.84} &  & 753.66 & 748.41 & 739.47 & \textbf{728.76} & \multirow{3}{*}{692.26(0\%)} \\
 &  & Gap & 9.28\% & 109.95\% &  & 15.57\% & 121.99\% &  & 8.87\% & 8.11\% & 6.82\% & \textbf{6.19\%} &  & 8.87\% & 8.11\% & 6.82\% & \textbf{5.72\%} &  & 8.87\% & 8.11\% & 6.82\% & \textbf{5.27\%} &  \\
 &  & Time & 5.76s & 5.98s &  & 0.99s & 1.14s &  & 7.65m & 12.13m & 12.16m & 49.43s &  & 15.04m & 24.58m & 25.49m & 1.69m &  & 29.50m & 51.78m & 52.62m & 3.56m &  \\ \cline{2-24} 
 & \multirow{3}{*}{$40 \times 10$} & Obj. & 953.14 & 1937.96 &  & 984.55 & 2045.78 &  & 947.64 & 941.60 & \textbf{930.44} & 931.04 &  & 947.64 & 941.60 & 930.44 & \textbf{929.13} &  & 947.64 & 941.60 & 930.44 & \textbf{925.93} & \multirow{3}{*}{998.39(0\%)} \\
 &  & Gap & -4.53\% & 94.11\% &  & -1.39\% & 104.91\% &  & -5.08\% & -5.69\% & \textbf{-6.81\%} & -6.75\% &  & -5.08\% & -5.69\% & -6.81\% & \textbf{-6.94\%} &  & -5.08\% & -5.69\% & -6.81\% & \textbf{-7.26\%} &  \\
 &  & Time & 10.06s & 11.10s &  & 1.34s & 1.36s &  & 18.29m & 25.07m & 21.06m & 73.29s &  & 35.83m & 54.84m & 42.18m & 2.52m &  & 1.18h & 1.89h & 1.42h & 5.40m &  \\ \hline
\end{tabular}%
\begin{tablenotes}   
        \footnotesize              
        \item[1] For OR-Tools, the solution and the ratio of optimally solved instances are reported.
        \item[2] “s”, “m”, and “h” denote seconds, minutes, and hours, respectively.
\end{tablenotes}
\end{threeparttable}
}
\vspace{-13pt}
\end{table}

\subsection{Performance on Public Benchmarks}
We evaluate generalization on Hurink and Brandimarte benchmarks using the model trained on the $10\times5$ instances from SD2. While OR-Tools and 2SGA achieve near-optimal results, their computational cost limits practicality. \textit{MIStar} maintains stable performance over both DRL-based construction and rule-base method across all benchmarks, demonstrating its effective generalization through the learned general policy. Regarding solving time, rule-based methods exhibit notable variation across Hurink dataset, with runtimes increasing from \texttt{edata} to \texttt{rdata} and \texttt{vdata} due to the growing set of assignable machines that enlarge the neighborhood. In contrast, \textit{MIStar} maintains stable runtime across all distributions.
\begin{table}[]
\centering
\caption{Results on public benchmarks}
\label{tab:benchmark}
\resizebox{\columnwidth}{!}{%
\begin{threeparttable} 
\large
\setlength{\tabcolsep}{1pt}
\begin{tabular}{cllccclccclccclccc}
\hline
\multicolumn{2}{c}{\multirow{2}{*}{\textbf{Method}}} &  & \multicolumn{3}{c}{\textbf{mk}} &  & \multicolumn{3}{c}{\textbf{la(rdata)}} &  & \multicolumn{3}{c}{\textbf{la(edata)}} &  & \multicolumn{3}{c}{\textbf{la(vdata)}} \\ \cline{4-6} \cline{8-10} \cline{12-14} \cline{16-18} 
\multicolumn{2}{c}{} &  & Obj. & Gap & Time(s) &  & Obj. & Gap & Time(s) &  & Obj. & Gap & Time(s) &  & Obj. & Gap & Time(s) \\ \hline
\multicolumn{2}{c}{OR-Tools} &  & 174.20 & 0.99\% & 1447.08 &  & 935.80 & 0.16\% & 1397.43 &  & 1028.93 & 0.01\% & 899.60 &  & 919.60 & -0.01\% & 639.17 \\
\multicolumn{2}{c}{2SGA} &  & 175.20 & 1.57\% & 57.60 &  & \multicolumn{3}{c}{-} &  & \multicolumn{3}{c}{-} &  & 812.20\tnote{1} & 0.39\% & 51.43 \\ \hline
\multicolumn{2}{c}{DANIEL(S)\tnote{2}} &  & 180.80 & 4.81\% & 1.71 &  & 984.63 & 5.39\% & 1.99 &  & 1120.28 & 8.88\% & 2.00 &  & 934.75 & 1.64\% & 2.02 \\
\multicolumn{2}{c}{HGNN(S)} &  & 192.20 & 11.42\% & 1.72 &  & 1004.60 & 7.53\% & 1.84 &  & 1123.10 & 9.16\% & 2.44 &  & 934.23 & 1.58\% & 2.12 \\
\multicolumn{2}{c}{DANIEL(G)} &  & 184.30 & 6.84\% & 0.45 &  & 1044.85 & 11.84\% & 0.48 &  & 1176.40 & 14.34\% & 0.48 &  & 965.38 & 4.97\% & 0.48 \\
\multicolumn{2}{c}{HGNN(G)} &  & 200.00 & 15.94\% & 0.46 &  & 1035.75 & 10.86\% & 0.52 &  & 1189.33 & 15.59\% & 0.48 &  & 962.48 & 4.66\% & 0.47 \\ \hline
\multicolumn{2}{c}{GD-100\tnote{3}} &  & 178.92 & 3.72\% & 22.86 &  & 963.03 & 3.08\% & 26.09 &  & 1100.46 & 6.96\% & 4.14 &  & 924.53 & 0.53\% & 89.94 \\
\multicolumn{2}{c}{FI-100} &  & 178.85 & 3.68\% & 34.91 &  & 968.60 & 3.67\% & 36.16 &  & 1101.43 & 7.05\% & 6.77 &  & 927.33 & 0.83\% & 131.90 \\
\multicolumn{2}{c}{BI-100} &  & 178.43 & 3.44\% & 37.98 &  & 964.80 & 3.27\% & 41.25 &  & 1100.63 & 6.97\% & 5.91 &  & 927.55 & 0.86\% & 122.02 \\
\multicolumn{2}{c}{MIStar-100} &  & \textbf{177.80} ($ \pm 0.74$)\tnote{4} & \textbf{3.07\%} & 17.50 &  & \textbf{958.90} ($ \pm 4.55$) & \textbf{2.64\%} & 19.86 &  & \textbf{1099.38} ($ \pm 4.17$) & \textbf{6.85\%} & 19.60 &  & \textbf{923.55} ($ \pm 1.81$) & \textbf{0.42\%} & 20.36 \\ \hline
\multicolumn{2}{c}{GD-200} &  & 178.92 & 3.72\% & 45.73 &  & 963.03 & 3.08\% & 51.94 &  & 1100.46 & 6.96\% & 8.27 &  & 924.53 & 0.53\% & 179.99 \\
\multicolumn{2}{c}{FI-200} &  & 178.85 & 3.68\% & 74.64 &  & 968.60 & 3.67\% & 76.17 &  & 1101.43 & 7.05\% & 14.45 &  & 927.33 & 0.83\% & 272.34 \\
\multicolumn{2}{c}{BI-200} &  & 178.43 & 3.44\% & 81.03 &  & 964.65 & 3.25\% & 86.20 &  & 1100.60 & 6.97\% & 12.70 &  & 926.55 & 0.75\% & 251.11 \\
\multicolumn{2}{c}{MIStar-200} &  & \textbf{177.70} ($ \pm 0.75$) & \textbf{3.01\%} & 36.45 &  & \textbf{957.23} ($ \pm 4.72$) & \textbf{2.46\%} & 40.66 &  & \textbf{1099.30} ($ \pm 4.13$) & \textbf{6.84\%} & 40.05 &  & \textbf{922.70} ($ \pm 1.71$) & \textbf{0.33\%} & 41.83 \\ \hline
\multicolumn{2}{c}{GD-400} &  & 178.92 & 3.72\% & 91.79 &  & 963.03 & 3.08\% & 103.81 &  & 1100.46 & 6.96\% & 16.51 &  & 924.53 & 0.53\% & 360.28 \\
\multicolumn{2}{c}{FI-400} &  & 178.85 & 3.68\% & 162.16 &  & 968.60 & 3.67\% & 158.27 &  & 1101.35 & 7.04\% & 31.56 &  & 927.33 & 0.83\% & 586.32 \\
\multicolumn{2}{c}{BI-400} &  & 178.43 & 3.44\% & 169.70 &  & 964.50 & 3.24\% & 180.45 &  & 1100.33 & 6.94\% & 27.24 &  & 925.55 & 0.64\% & 521.21 \\
\multicolumn{2}{c}{MIStar-400} &  & \textbf{177.60} ($ \pm 0.63$) & \textbf{2.96\%} & 76.35 &  & \textbf{956.38} ($ \pm 4.04$) & \textbf{2.37\%} & 85.45 &  & \textbf{1099.18} ($ \pm 4.25$) & \textbf{6.83\%} & 83.77 &  & \textbf{921.85} ($ \pm 1.77$) & \textbf{0.24\%} & 87.23 \\ \hline
\end{tabular}%
\begin{tablenotes}    
        \footnotesize          
        \item[1] The makespan and gap of 2SGA on la(vdata) benchmark are computed on la1-30 instances, as reported in~\cite{63}.
        \item[2] “(S)” denotes the sampling strategy, and “(G)” denotes the greedy strategy.
        \item[3] “-100”, “-200”, and “-400” denote different iteration budgets.
        \item[4] Values are reported as the average ($\pm$ standard deviation) over 10 independent runs.
\end{tablenotes}
\end{threeparttable}
}
\vspace{-15pt}
\end{table}

\subsection{Results on Larger Instances}
To further evaluate the generalization and scalability of our framework, we conducted experiments on even larger instances (up to 1,500 operations). For each problem size, 10 instances were randomly generated, and both \textit{MIStar} and OR-Tools were given a one-hour time limit per instance. We report their average optimality gap across different scales, whcih is calculated based on the Lower Bound (LB) as $\left(1 - \frac{\text{LB}}{\text{Objective}}\right) \times 100\%.$ For \textit{MIStar}, the results were measured at 200 iterations using the model trained on instances of size $20 \times 15$.

The results in Table~\ref{tab:largerscales} highlight two critical advantages of our approach. First and most strikingly, on the highly complex $50 \times 30$ instances, OR-Tools failed to produce any feasible solution within the time limit, whereas \textit{MIStar} consistently found high-quality solutions. This demonstrates \textit{MIStar}'s superior robustness and scalability on truly challenging problems. Second, for the other large instances where OR-Tools did find a solution, \textit{MIStar} achieves comparable solution quality in a small fraction of the time. For example, on the $100 \times 10$ instances, \textit{MIStar} reaches a $66.03\%$ optimality gap in just under 16 minutes, while OR-Tools requires a full hour to achieve a $62.22\%$ gap. This massive speed-up, combined with the strong generalization from a model trained only on smaller instances, confirms that \textit{MIStar} provides an effective and highly scalable solution for large-scale FJSP.
\begin{table}[]
\centering
\caption{Generalization Performance on Larger-Scale Instances}
\label{tab:largerscales}
\footnotesize
\setlength{\tabcolsep}{3pt}
\begin{tabular}{ccccc}
\hline
\textbf{Instance Size} & \textbf{Method} & \textbf{Objective} & \textbf{Time (min)} & \textbf{\begin{tabular}[c]{@{}c@{}}Optimality \\ Gap\end{tabular}} \\ \hline
$50 \times 15$                  & OR-Tools        & 881.45             & 60.0                & 39.05\%                                                            \\ \cline{2-5} 
(LB:537.2)             & MIStar-200      & 969.7              & 10.6                & 44.60\%                                                            \\ \hline
$60 \times 15$                  & OR-Tools        & 1075.1             & 60.0                & 46.94\%                                                            \\ \cline{2-5} 
(LB:570.2)             & MIStar-200      & 1109.8             & 14.2                & 48.62\%                                                            \\ \hline
$100 \times 10$                 & OR-Tools        & 1922.65            & 60.0                & 62.22\%                                                            \\ \cline{2-5} 
(LB:726.5)             & MIStar-200      & 2138.4             & 15.9                & 66.03\%                                                            \\ \hline
$50 \times 30$                  & OR-Tools        & Infeasible         & 60.0                & -                                                                  \\ \cline{2-5} 
(LB:N/A)               & MIStar-200      & 1040.1             & 60.0                & -                                                                  \\ \hline
\end{tabular}
\end{table}

\subsection{Ablation Studies}
We conducted ablation experiments on the SD2 dataset to assess the memory module and parallel greedy search strategy. As evidenced in Table~\ref{tab:ablation} (Appendix~\ref{app:ablation}), their combination improves performance: the parallel greedy search strategy significantly enhances search efficiency and mitigates the risk of early convergence to local optima. Meanwhile, the memory module optimizes policy decisions, leading to higher solution quality. We further investigated the impact of the parallel scale \( P \) on our strategy. Figure~\ref{fig:parallel} (Appendix~\ref{app:ablation}) demonstrates that increasing \( P \) generally improves performance at the cost of longer runtimes, with the optimal value depending on the instance characteristics. Therefore, \( P \) should be properly adjusted to achieve a trade-off between runtime and solution quality.

\section{Conclusion}
This study proposes \textit{MIStar}, the first DRL-based improvement heuristic framework for solving the FJSP. By formulating the iterative refinement as an MDP, we introduce a heterogeneous disjunctive graph representation tailored for FJSP solutions and employ a memory-enhanced heterogeneous graph neural network for feature extraction and thorough exploration of the solution space. Furthermore, we propose a parallel greedy search strategy that significantly reduces the number of iterations while achieving high-quality solutions. Extensive experiments on synthetic data and public benchmarks demonstrate the superior performance of \textit{MIStar} over both traditional handcrafted improvement heuristics and DRL-based construction methods. Future work may focus on enabling the network to adaptively adjust the parallel scale $P$, and on extending our methodology to disruptive local moves in larger neighborhoods, such as the reconstruction of solution segments, to further enhance exploration and reduce sensitivity to initial solutions.

\section*{Acknowledgements}
This work is supported by the Jilin Provincial Department of Science and Technology Project under Grant Grant20240101369JC.

\bibliographystyle{unsrt}
\bibliography{reference}

\newpage
\appendix
\section{Comparative Analysis of Neighborhood Structures}
\label{app:neighbor}
The critical path, whose length represents the makespan~$C_{\max}$, is essential in constructing feasible solutions~\cite{60}. Operations on this path are critical operations, and critical blocks are maximal sequences of adjacent critical operations processed on the same machine~\cite{16}. In local search, the neighborhood comprises feasible solutions generated by small perturbations to the current one~\cite{60}. For scheduling, these perturbations typically involve reordering operations on the critical path. The neighborhood structure directly affects the efficiency of the search algorithm, making it important to eliminate unnecessary or infeasible moves~\cite{16}. Among various neighborhood designs, the N5 neighborhood is significantly smaller than others~\cite{16}. It identifies a single critical path and defines a move by reversing either the first two or the last two operations within a critical block.

While effective for JSP~\cite{30}, the N5 neighborhood cannot be applied to FJSP. Its limitation lies in only permitting reordering within one critical block, where operations are processed on the same machine. This perturbation preserves the original machine assignments, severely constraining the exploration of the solution space. More critically, unlike JSP where operations of the same job are processed on different machines, FJSP allows multiple operations of a job to be processed on the same one. Consequently, swapping operations within critical blocks in FJSP may violate job precedence constraints, resulting in infeasible solutions (Figure~\ref{fig:infeasible}).

In contrast, we employ the Nopt2 neighborhood~\cite{56} that simultaneously modifies operation sequences and machine assignments while maintaining feasibility through time-constrained reinsertion (see paragraph~\ref{para:nopt2}). This structure enables broader solution space exploration and improves efficiency by using optimal insertion intervals to reduce the neighborhood size.
\begin{figure}[htbp]
\vspace{-0.4cm}
    \centering
    \includegraphics[width=0.4\linewidth]{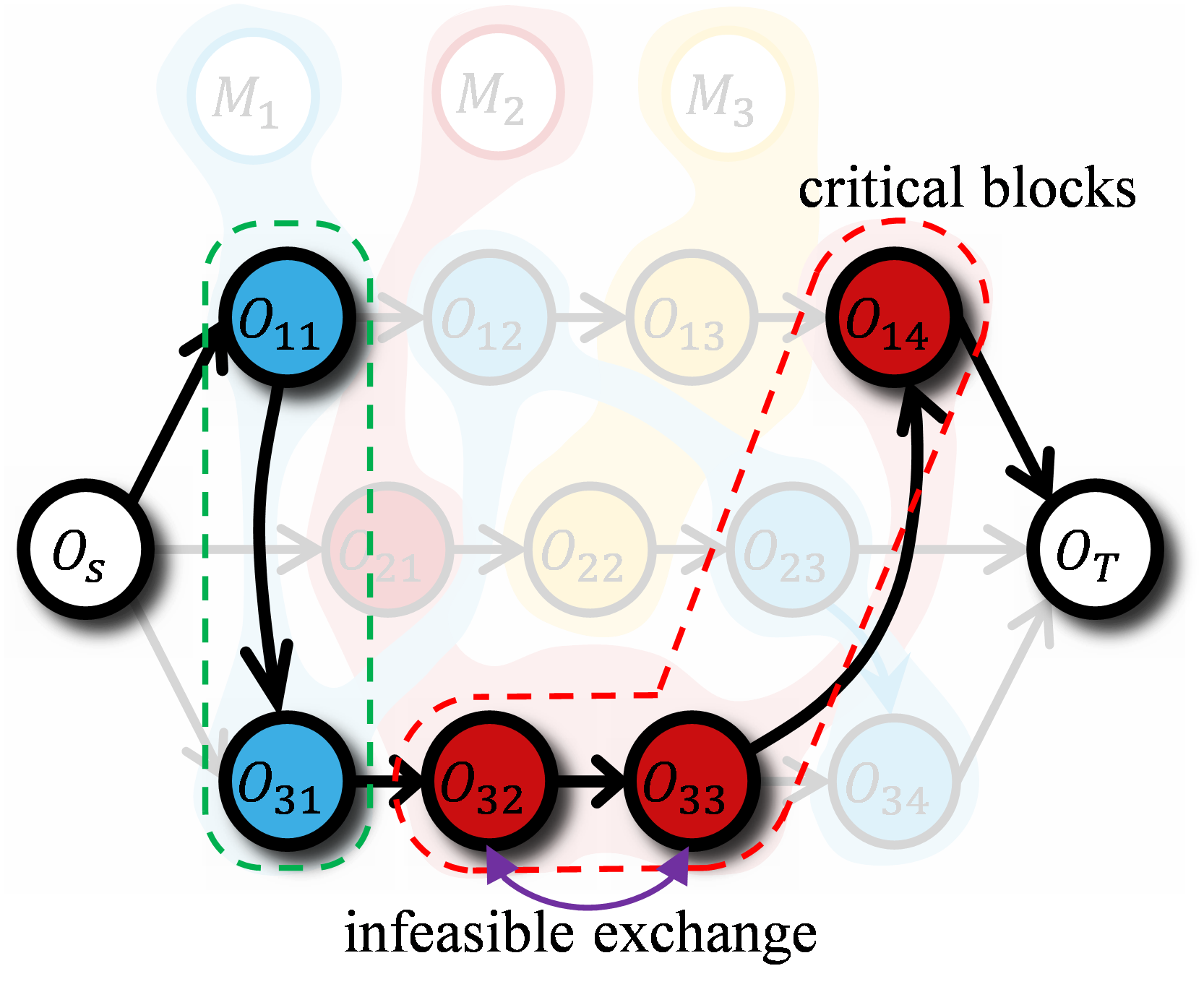}
    \caption{\textbf{Infeasible exchange caused by the N5 neighborhood in FJSP.} Dotted lines of different colors are critical blocks.}
    \label{fig:infeasible}
\end{figure}

\section{Definition of Raw Feature Vectors}
\label{app:features}
The raw features of operations and machines at each state \( s_t \) (omitting step \( t \)) are defined as follows.
\textit{Features of Operations}: Each operation node \( O_{ij} \in \mathcal{O} \setminus \{O_S, O_T\} \) is represented by a 9-dimensional feature vector \( \mathbf{x}_{O_{ij}} \):
\begin{enumerate}[
    leftmargin=*,
    itemindent=0pt,
    labelwidth=1.5em,
    labelsep=0.5em,
    align=left,
    label=\arabic*),
    nosep
]
    \item \textbf{Minimum Processing Time}: shortest processing time among compatible machines.
    \item \textbf{Average Processing Time}: mean processing time among compatible machines.
    \item \textbf{Range of Processing Time}: difference between the maximum and minimum processing times among compatible machines.
    \item \textbf{Machine Availability Ratio}: ratio of the number of compatible machines to total machines.
    \item \textbf{Processing Time}: processing time on the assigned machine.
    \item \textbf{Earliest Start Time}: earliest possible start time of the operation.
    \item \textbf{Latest Start Time}: latest start time without schedule delay.
    \item \textbf{Scheduling Order on Machine}: execution order on machine \( M_k \) (starting from 1).
    \item \textbf{Processing Time of Job}: total processing time of parent job.
\end{enumerate}

\textit{Features of Machines}: Each machine node \( M_k \in \mathcal{M} \) owns a 4-dimensional feature vector \( \mathbf{y}_{M_k} \):
\begin{enumerate}[
    leftmargin=*,
    itemindent=0pt,
    labelwidth=1.8em,
    labelsep=0.5em,
    align=left,
    label=\arabic*),
    nosep,
    topsep=2pt
]
    \item \textbf{Utilization Rate}: ratio of active processing time to makespan.
    \item \textbf{Criticality Degree}: proportion of operations on critical path among processable operations.
    \item \textbf{Load Ratio}: percentage of assigned operations versus machine capacity.
    \item \textbf{Processing Efficiency}: mean processing time of assigned operations.
\end{enumerate}

\section{Advantages of the Directed Heterogeneous Graph}
\label{app:directedgraph}
In the FJSP, the global scheduling state comprises both operations and machines. However, the absence of machine nodes in traditional disjunctive graphs results in an indirect representation of machine information through numerical values~\cite{30}, which limits the policy from assigning operations to available machines effectively. The heterogeneous disjunctive graph $\mathcal{H}$~\cite{43} addresses this by introducing machine nodes (Fig.~\ref{fig:HGNN}), where each operation connects to compatible machines via undirected edges, such that solving the FJSP is equivalent to selecting one O-M arc per operation while removing others. This reduces the graph density from $\sum_{k=1}^{|\mathcal{M}|} \binom{n_k}{2}$ to $\sum_{k=1}^{|\mathcal{M}|} n_k$ with $n_k$ being the number of operations assignable to machine $M_k$ and enables better injection of machine status~\cite{43}. 

While beneficial for construction methods, this structure provides limited advantages for improvement approaches where complete solutions contain fixed machine assignments. This leads to a total of $|\mathcal{O}|$ disjunctive arcs—more than the $|\mathcal{O}| - |\mathcal{M}|$ arcs in the conventional graph, negating the benefit of reduced graph density. Most importantly, undirected edges hinder the clear encoding of machine processing sequences, which is crucial for distinguishing between different scheduling solutions and identifying the critical path when constructing the action space~\cite{30, 56}. Instead, our heterogeneous graph $\overrightarrow{\mathcal{H}}$(Figure~\ref{fig:1c}), tailored for complete schedules, overcomes this limitation by encoding machine processing sequences through directed hyper-edges, which effectively represent the ordering of operations across different solutions without significantly increasing graph density. It adopts a unified structure with fully directed arrows, avoiding the complexity of mixed edge types and better aligning with current learning-to-search graph-based frameworks~\cite{30}.
\begin{figure}[htbp]
    \centering
    \begin{subfigure}[b]{0.32\textwidth}
        \includegraphics[width=\linewidth]{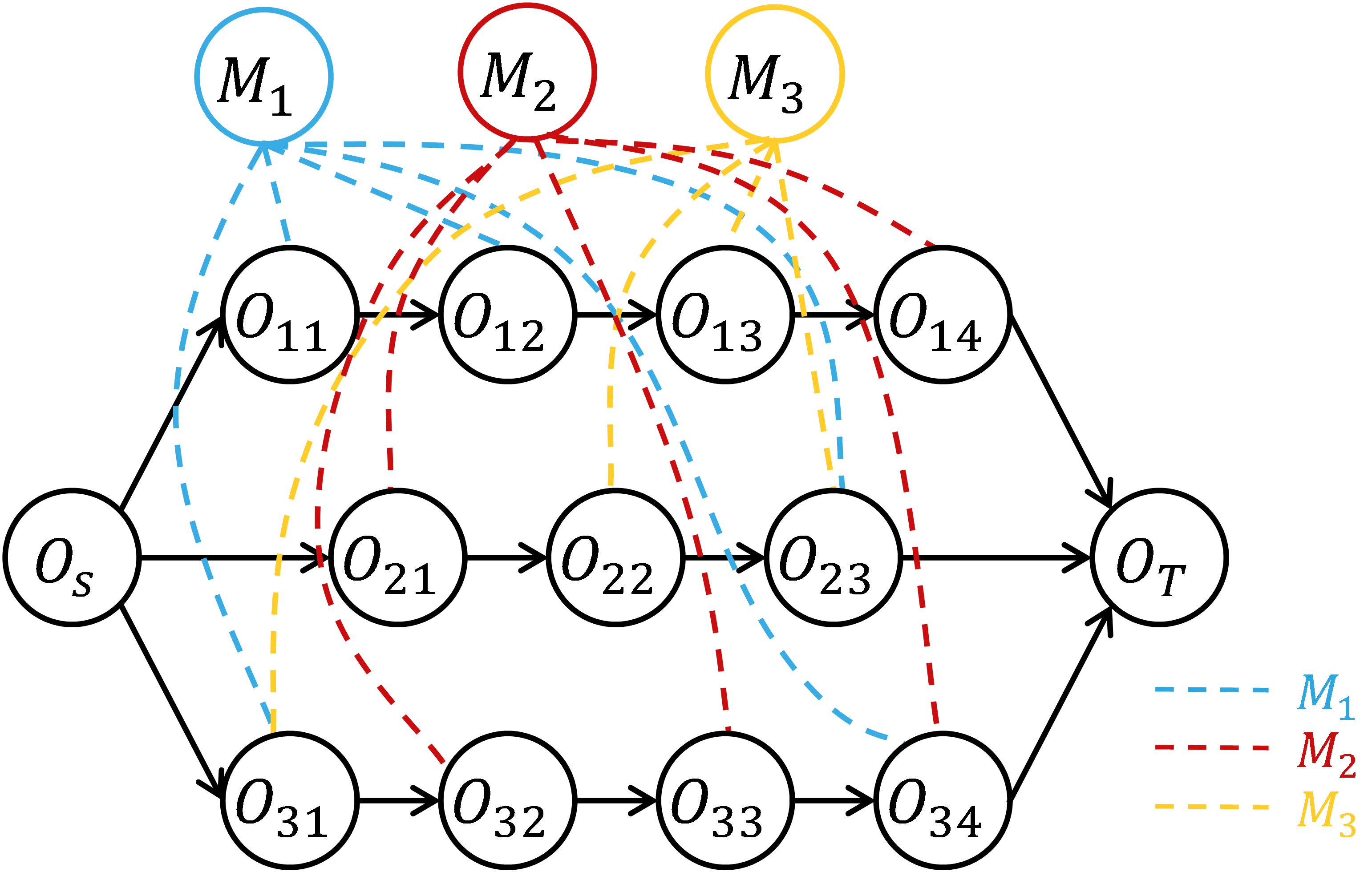}
        \caption{}\label{fig:6a}
    \end{subfigure}
    \hspace{0.05\textwidth}  
    \begin{subfigure}[b]{0.32\textwidth}
        \includegraphics[width=\linewidth]{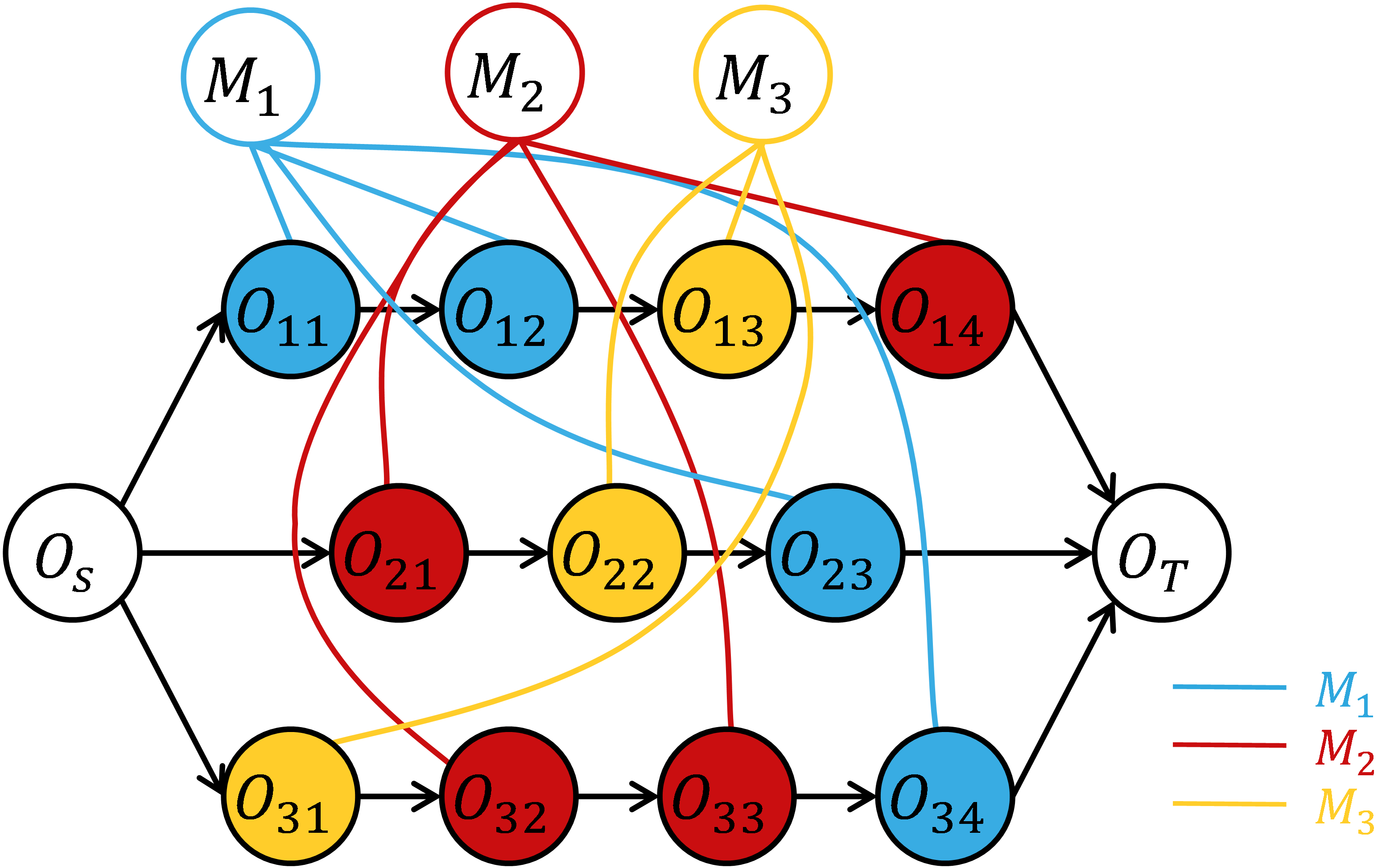}
        \caption{}\label{fig:6b}
    \end{subfigure}
    \caption{\textbf{Graph representations in $\mathcal{H}$~\cite{43}.} (a) FJSP instance; (b) feasible solution.}
    \label{fig:HGNN}
\end{figure}

\section{Compact State Representation in Memory}
\label{app:compact}
\newtheorem{theorem}{Theorem}
\subsection{Motivation and Compact Formulation}
During the improvement, we store the state-action pair $(s_t, a_t)$ at each step in a memory module. However, the state is represented as a complex heterogeneous graph with rich node attributes and adjacency relations, leading to excessive memory consumption if fully stored. Furthermore, evaluating similarity between such graphs typically requires GNNs to extract embeddings, introducing additional network components and computational cost.

To address this, we simplify the graph representation to better support our memory mechanism. First, since our current goal is to distinguish different scheduling solutions---rather than using node features as heuristic guidance---we omit the node attributes and retain only the graph topology. Second, for a given problem instance, solutions differ only in operation sequences on machines (i.e., hyper-arcs), while sharing the same job precedence constraints (i.e., conjunctive arcs). Based on this insight, we isolate the hyper-arcs from $\overrightarrow{\mathcal{H}}_t$~(see Fig.~\ref{fig:decompose}) and use them as the key feature to characterize each solution. Specifically, we use the operation-machine incidence matrix $\boldsymbol{L}_t$ as a compact representation of $s_t$ and measure similarity via the Frobenius inner product.
\begin{figure}[htbp]
    \centering
    \begin{subfigure}[b]{0.32\textwidth}
        \includegraphics[width=\linewidth]{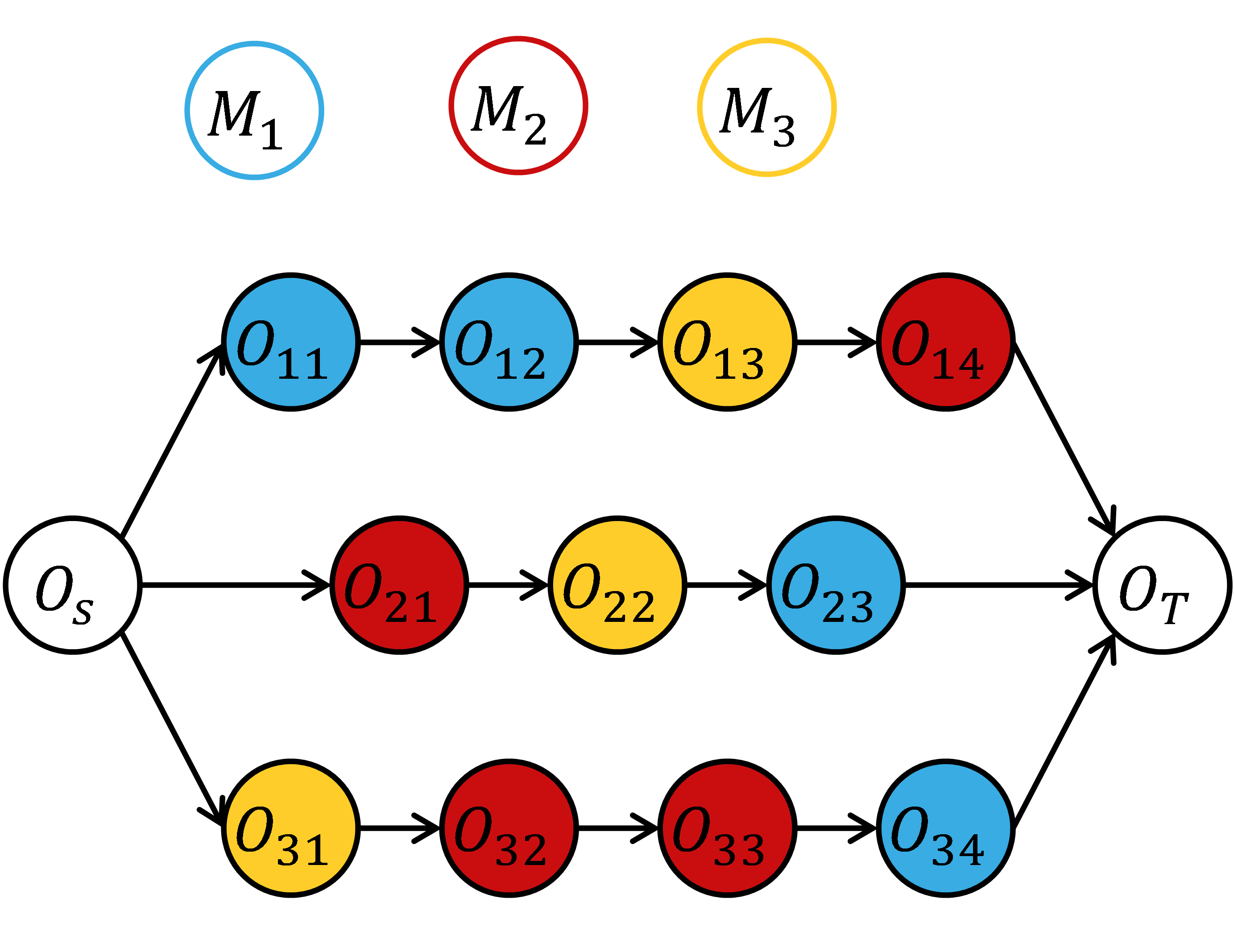}
        \caption{}\label{fig:6a}
    \end{subfigure}
    \hspace{0.05\textwidth}  
    \begin{subfigure}[b]{0.32\textwidth}
        \includegraphics[width=\linewidth]{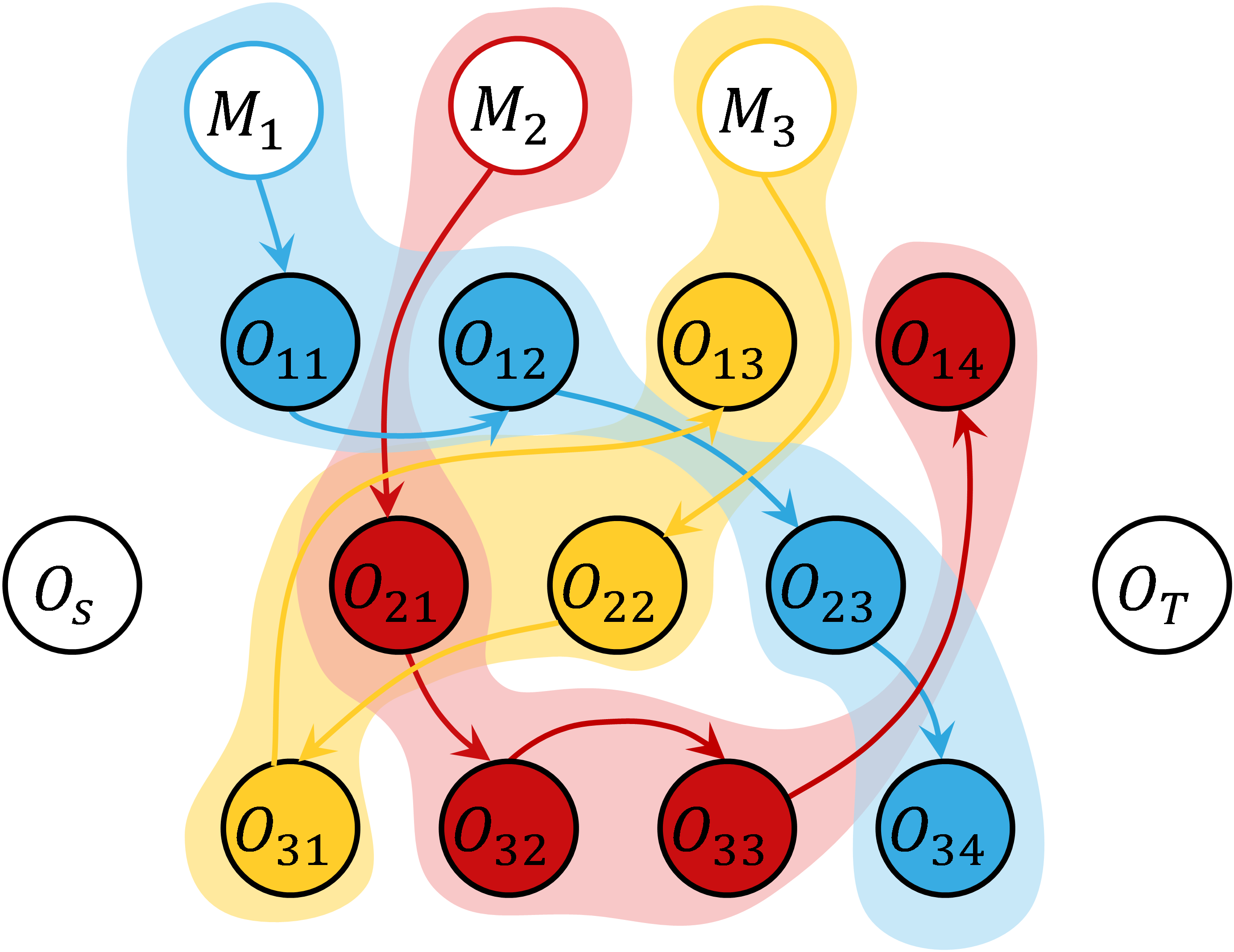}
        \caption{}\label{fig:6b}
    \end{subfigure}
    \caption{\textbf{Decomposition of FJSP solution structure.} (a) shared job precedence constraints; (b) distinct machine processing sequences.}
    \label{fig:decompose}
\end{figure}

Instead of serving as merely binary indicators of machine assignment, the non-zero element $(\boldsymbol{L}_{t})_{i,j}$ represents the processing order index of operation $i$ on its assigned machine $j$. This crucial design ensures that two schedules with identical machine assignments but different sequences (e.g., a good sequence vs. a poor one) are correctly distinguished. For instance, a good sequence of operations corresponding to [\textit{1, 2, 3, 4, 5}] and a poor, reversed sequence [\textit{5, 4, 3, 2, 1}] on the same machine will be recognized as maximally dissimilar by the Frobenius inner product. The mathematical foundation for this lies in the \textbf{Rearrangement Inequality}, which guarantees that the inner product is maximized for similarly ordered sequences and minimized for oppositely ordered ones. This allows our similarity metric to be both computationally efficient and highly sensitive to the quality of the operation sequence.
\begin{theorem}
The Rearrangement Inequality states that, for two sequences $a_1 \leq a_2 \leq \cdots \leq a_n$ and $b_1 \leq b_2 \leq \cdots \leq b_n$, the inequalities
\[
a_1b_n + a_2b_{n-1} + \cdots + a_nb_1 \leq a_1b_{\pi(1)} + a_2b_{\pi(2)} + \cdots + a_nb_{\pi(n)} \leq a_1b_1 + a_2b_2 + \cdots + a_nb_n
\]
hold, where $\pi(1), \pi(2), \ldots, \pi(n)$ is any permutation of $1, 2, \ldots, n$.
\end{theorem}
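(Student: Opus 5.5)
I will prove the upper bound with an exchange (bubble-sort) argument and then obtain the lower bound from it by a sign flip. Fix the sorted sequences $a_1\le\cdots\le a_n$ and $b_1\le\cdots\le b_n$. For a permutation $\pi$, write $S(\pi)=\sum_{i=1}^n a_i b_{\pi(i)}$.

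The basic tool is a two-term swap inequality. Suppose $i<j$ and $\pi(i)>\pi(j)$, so $\pi$ has an inversion at positions $i,j$. Let $\pi'$ be $\pi$ with the values at $i$ and $j$ exchanged. Only two terms change, and
\[
S(\pi')-S(\pi)=a_ib_{\pi(j)}+a_jb_{\pi(i)}-a_ib_{\pi(i)}-a_jb_{\pi(j)}=(a_j-a_i)\bigl(b_{\pi(i)}-b_{\pi(j)}\bigr).
\]
This is $\ge 0$, because $a_j\ge a_i$ and, since $\pi(i)>\pi(j)$ and $b$ is sorted, $b_{\pi(i)}\ge b_{\pi(j)}$. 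So removing an inversion never decreases $S$.

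For the upper bound, I will show that the swap strictly reduces the number of inversions of $\pi$. The cleanest way is to take $j=i+1$, an adjacent inversion, which exists whenever $\pi\neq\mathrm{id}$. Swapping an adjacent inversion removes exactly one inversion. Repeating this finitely many times turns $\pi$ into the identity, and $S$ is non-decreasing along the way. Hence $S(\pi)\le S(\mathrm{id})=\sum_i a_ib_i$, which is the right-hand inequality.

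For the lower bound, I apply the upper bound to the sequence $c_k=-b_{n+1-k}$, which is again nondecreasing. Set $\sigma(i)=n+1-\pi(i)$, which is a permutation. Then
\[
\sum_i a_i c_{\sigma(i)}=-\sum_i a_i b_{\pi(i)} \le \sum_i a_i c_i=-\sum_i a_i b_{n+1-i}.
\]
Negating gives $\sum_i a_ib_{n+1-i}\le S(\pi)$.

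No step is a real obstacle. The only point needing care is the termination bookkeeping: I must confirm that an adjacent inversion exists for every non-identity permutation, and that swapping it lowers the inversion count by exactly one. Both are standard. Equality cases are not needed for the statement.
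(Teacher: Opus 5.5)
The paper gives no proof of this statement. It quotes the Rearrangement Inequality as a classical fact to justify why the Frobenius inner product of order-index matrices separates a machine sequence from its reversal, so there is no argument in the paper to compare yours against.

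Your proof is correct and complete:
\begin{itemize}
\item The swap identity $S(\pi')-S(\pi)=(a_j-a_i)\bigl(b_{\pi(i)}-b_{\pi(j)}\bigr)$ is computed correctly, and it is nonnegative at an inversion.
\item A non-identity permutation must have some $i$ with $\pi(i)>\pi(i+1)$; otherwise $\pi$ is increasing and hence the identity.
\item An adjacent swap changes the relative order of only that one pair, so it lowers the inversion count by exactly one. The process therefore reaches the identity after at most $\binom{n}{2}$ steps, with $S$ nondecreasing throughout.
\item The reflection $c_k=-b_{n+1-k}$, $\sigma(i)=n+1-\pi(i)$ preserves monotonicity and satisfies $c_{\sigma(i)}=-b_{\pi(i)}$. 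The lower bound then follows from the upper bound by negation.
\end{itemize}
Because every inequality is non-strict, ties among the $a_i$ or $b_i$ cause no trouble.

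It is the lower-bound half that the paper actually leans on in its ``sequence versus reversal'' example. That application is legitimate only because, for two schedules with the same machine assignment, the order indices on each machine are permutations of the same set $\{1,\dots,n_k\}$.
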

\subsection{Alternative Designs}
We have considered and tested other expressive representations for the historical states that explicitly encode sequence information.

\textbf{Multi matrices:} The adjacency matrix \( A^M_t \) of operations on machines encodes the processing order between operations, while the binary operation-machine incidence matrix \( L^B_t \) indicates machine assignment. The state similarity is then calculated by computing the inner products of both matrices separately and summing the results, expressed as:
\begin{equation}
\omega_{t,t'} = \langle L^B_t, L^B_{t'} \rangle_F + \langle A^M_t, A^M_{t'} \rangle_F
\label{eq:multi_matrix}
\end{equation}
\textbf{Difference matrix:} For the operation-machine incidence matrices that store processing order indices, we compute the element-wise difference between two schedules, \( L_t \) and \( L_{t'} \), to obtain a difference matrix, and then use the reciprocal of its Frobenius norm to measure similarity, formulated as:
\begin{equation}
\omega_{t,t'} = \frac{1}{|L_t - L_{t'}|_F + \epsilon} = \frac{1}{\sqrt{\sum_{i=1}^{|\mathcal{O}|} \sum_{j=1}^{|\mathcal{M}|} |(L_t)_{i,j} - (L_{t'})_{i,j}|^2} + \epsilon}
\label{eq:diff_matrix}
\end{equation}
However, these alternatives introduced higher storage or computational overhead without yielding significant performance gains. Our chosen representation thus strikes a deliberate and effective balance between representational power and efficiency.

\section{Complexity Analysis of Action Space}
\label{app:complexity}
Let \( F_O \in \mathbb{R}^{|\mathcal{O}| \times q} \) and \( F_M \in \mathbb{R}^{|\mathcal{M}| \times q} \) denote the embedding matrices of operation and machine nodes, respectively. To assign a priority score to each possible triple-action \([O_m, O_n, M_k]\), we compute the priority matrix \( F_{OOM} \in \mathbb{R}^{|\mathcal{O}| \times |\mathcal{O}| \times |\mathcal{M}|}\) through a two-step tensor operations involving \( F_O \) and \( F_M \). As a result, the space complexity of the action space \( \mathcal{A}_{original} \) is \( O(|\mathcal{O}|^2|\mathcal{M}|) \). Prior work~\cite{65} has shown that a large number of possible actions is known to lead to over-optimistic estimates of future rewards, degrading agent performance and learning efficiency. To address this, we reshape the action space by exploiting the constraint of FJSP: each operation can only be assigned to a unique machine. Given a solution of FJSP, there exists a mapping \( f_M: \mathcal{O} \to \mathcal{M} \), which allows us to omit the third dimension and represent actions as pairs \([O_m, O_n]\). The space complexity of \( \mathcal{A}_{optimized} \) is reduced to \( O(|\mathcal{O}|^2) \) and the optimized priority matrix is calculated as follows:
\begin{equation}
F_{OO} = F_OF_O^{\top} \in \mathbb{R}^{|\mathcal{O}| \times |\mathcal{O}|}
\end{equation}
We can formalize the relationship between the original and optimized action spaces as follows:
\begin{theorem}
The optimized action space \( \mathcal{A}_{optimized} \) is functionally equivalent to \( \mathcal{A}_{original} \) in terms of admissible actions under the FJSP constraints.
\end{theorem}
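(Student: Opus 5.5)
The plan is to exhibit an explicit bijection between the admissible (unmasked) parts of the two action spaces, using the assignment map $f_M:\mathcal{O}\to\mathcal{M}$ fixed by the current solution $s_t$.

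First, make the admissible sets precise. Let $\mathcal{A}^{adm}_{original}\subseteq \mathcal{O}\times\mathcal{O}\times\mathcal{M}$ be the set of triples $[O_m,O_n,M_k]$ that are valid Nopt2 moves. These are the moves that insert the critical operation $O_m$ immediately before $O_n$ in the sequence of $M_k$. Since insertion "before $O_n$ on $M_k$" only makes sense if $O_n$ is currently processed on $M_k$, every admissible triple satisfies $M_k=f_M(O_n)$. We also need $M_k\in\mathcal{M}_m$, the compatible set of $O_m$. The terminal dummy $O_T$ encodes insertion at the end of a sequence, so we attach it to each machine. To keep $f_M$ a function, we treat "before $O_T$ on $M_k$" as an end-of-sequence slot indexed per machine, which the implementation realizes as a per-machine terminal position. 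Let $\mathcal{A}^{adm}_{optimized}\subseteq\mathcal{O}\times\mathcal{O}$ be the pairs left unmasked in $F_{OO}$.

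Second, define $\phi([O_m,O_n,M_k])=[O_m,O_n]$ and $\psi([O_m,O_n])=[O_m,O_n,f_M(O_n)]$. Then check three things:
\begin{enumerate}
\item $\phi$ maps admissible triples to admissible pairs. This holds by definition of the mask, which is built from the same Nopt2 criteria.
\item $\psi\circ\phi=\mathrm{id}$ on admissible triples. This follows from the constraint $M_k=f_M(O_n)$ derived above.
\item $\phi\circ\psi=\mathrm{id}$ on pairs, which is trivial.
\end{enumerate}
It follows that $\phi$ is a bijection between the admissible sets. The corresponding actions produce the same successor solution $s_{t+1}$, since the transition depends only on $(O_m,O_n,M_k)$ and $M_k$ is recovered exactly. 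Hence the two spaces are functionally equivalent: every admissible original action has a unique optimized representative, and conversely, with identical effect on the MDP state. The cardinality reduction from $O(|\mathcal{O}|^2|\mathcal{M}|)$ to $O(|\mathcal{O}|^2)$ is then immediate, because $\phi$ collapses only triples whose third coordinate is inconsistent with $f_M$, and all such triples are inadmissible anyway.

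The main obstacle is the boundary cases: insertion at the end of a machine's sequence, and insertion on an empty machine. In both cases there is no real $O_n$ on $M_k$, so $f_M(O_n)$ is undefined or not unique. I would first handle these by augmenting $\mathcal{O}$ with one terminal placeholder per machine, so that $f_M$ extends to a function. This changes $|\mathcal{O}|$ only by $|\mathcal{M}|$, which preserves the $O(|\mathcal{O}|^2)$ bound. I would then rerun the bijection argument on the augmented set. If the paper's convention uses a single shared $O_T$, the equivalence would need to be stated up to this identification, and I would note that explicitly.
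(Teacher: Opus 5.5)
Your proposal is correct and takes the same route as the paper. The paper also recovers the third coordinate through the assignment map $f_M$ (so $M_k = f_M(O_n)$) and maps back and forth between triples and pairs; your explicit maps $\phi,\psi$ and the observation that corresponding actions yield the same successor $s_{t+1}$ simply make its two-line argument precise. Your end-of-sequence handling is a genuine refinement: the paper allows $O_n = O_T$ but never notes that $f_M(O_T)$ is undefined, so with a single shared $O_T$ the pair $[O_m, O_T]$ can stand for several machines, and a convention such as your per-machine terminal slots is needed for the claimed one-to-one correspondence to hold.
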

\begin{proof}
For any feasible action \( a = [O_m, O_n, M_k] \in \mathcal{A}_{original} \), there exists a unique \( a' = [O_m, O_n] \in \mathcal{A}_{optimized} \) with \( M_k = f_M(O_n) \). Conversely, any \( a' \in \mathcal{A}_{optimized} \) can be mapped back to \( a \in \mathcal{A}_{original} \) via \( f_M \).
Thus, this compaction preserves all feasible actions while eliminating invalid ones (i.e., actions where the operation and machine do not match). This shaping of the action space reduces learning difficulty and accelerates model convergence without compromising the agent's performance.
\end{proof}

\begin{algorithm}
\caption{Training procedure with $n$-step PPO using parallel search.}
\label{algo:ppo_parallel}

\SetKwInput{KwInput}{Input} 
\SetKwInput{KwOutput}{Output}

\KwInput{\textit{MIStar} with trainable parameters $\Theta = \{ \delta, \theta, \varphi \}$, parallel scale $P$, iteration limit $T$, update size $n$, validate size $v$, new data generation step $d$, batch size $B$, total number of training epochs $I$\;}
         
\KwOutput{Trained \textit{MIStar} with parameters $\Theta^* = \{ \delta^*, \theta^*, \varphi^* \}$\;}

\For{$i = 0$ \KwTo $i<I$}{
    \If{$i \bmod d = 0$}{
        Randomly generate $B$ instances\;
    }
    Compute initial solutions $\{s_0^1, \ldots, s_0^B\}$ using DANIEL\;
    
    \For{$t = 0$ \KwTo $T$}{
        \For{$s_t^b \in \{s_t^1, \ldots, s_t^B\}$}{
            Initialize a training data buffer $D^b$ and memory buffer $H^b$ with size 0\;
            Extract embeddings using MHGNN\;
            
            \For{$p = 1$ \KwTo $P$}{
                Sample a local move $a_t^{bp} \sim \pi_\delta(\cdot | s_t^b)$\;
                Compute $C_{\max}(s_{t+1}^{bp})$ w.r.t $a_t^{bp}$\;
            }
            Identify the best action $a_t^{b*}$ with greatest improvement \;
            Update $s_t^b$ w.r.t $a_t^{b*}$ and compute similarity score $\Omega$ from $H^b$\;
            Compute reward $r_t(s_t^b, a_t^{b*}) = r_{\textit{gain}} - r_{\textit{penalty}}$\;
            Store the data $(s_t^b, a_t^{b*}, r_t)$ into $D^b$ and store $(s_t^b, a_t^{b*})$ into $H^b$\;
                
            \If{$t \bmod n = 0$}{
                Compute the generalised advantage estimates $\hat{A}_t$\;
                Compute the PPO loss $L$, and optimize the parameters $\Theta$ for $R$ epochs\;
                Update network parameters\;
            }
            Clear buffers $D^b$ and $H^b$\;
        }
    }
    \If{$i \bmod v = 0$}{
        Validate the policy\;
    }
    $i = i + B$\;
}
\end{algorithm}
\section{The $n$-step PPO Algorithm with Parallel Greedy Exploration Strategy}
\label{app:training}
We adopt the PPO algorithm~\cite{58} based on the actor-critic architecture as our training strategy. The actor is the policy network and the critic provides state value estimation. The network parameters are updated every $n$ steps, which effectively addresses sparse rewards and out-of-memory issues~\cite{30}. The flexibility of FJSP leads to a large solution space, requiring numerous iterations to converge to high-quality solutions~\cite{77}. To address this challenge, we introduce a parallel greedy exploration strategy, which evaluates multiple candidate actions in parallel and greedily selects the one yielding the greatest improvement. This approach enables the exploration of multiple solutions per iteration, yielding high-quality results with fewer iterations. Moreover, by prioritizing solutions with the greatest improvement, the model is guided toward more promising paths, minimizing the interference from inefficient paths during gradient updates and accelerating convergence.

The pseudo-code of the $n$-step PPO algorithm is presented in Algorithm~\ref{algo:ppo_parallel}.

\section{Training Configurations}
\label{app:conifg}
All hyperparameters were fine-tuned on the smallest-scale instances from SD2 and kept the same for all other models. Experiments were conducted on a platform equipped with an Intel i9-12900K processor with 24 cores, 64 GB memory, and an NVIDIA RTX 4090 GPU with 24 GB VRAM. Detailed specifications are provided in Table~\ref{tab:params}. Upon acceptance of this manuscript for publication, the full code will be made available on GitHub.
\begin{table}[htbp]
\centering
\caption{Training configuration parameters}
\label{tab:params}
\begin{tabular}{lc}
\toprule
Parameter & Value \\
\midrule
Number of MHGNN layers ($L$) & 4 \\
Number of Actor network layers & 4 \\
Number of Critic network layers & 3 \\
Number of attention heads in GAT ($n_H$) & 1 \\
Dimension of hidden layers ($q$) & 64 \\
GIN learnable parameter ($\epsilon$) & 0 \\
Discount factor ($\gamma$) & 1 \\
Clipping ratio & 0.2 \\
Policy function coefficient & 1 \\
Value function coefficient & 0.5 \\
Entropy coefficient & 0.01 \\
Learning rate & $5\times10^{-4}$ \\
Optimizer & Adam \\
Memory buffer size & 600 \\
KNN retrieval count ($K$) & 15 \\
Similarity penalty coefficient ($\lambda$) & 10 \\
PPO update interval ($n$) & 10 \\
Rounds per PPO update ($R$) & 3 \\
Total training epochs ($I$) & 20,000 \\
Validation interval ($v$) & 5 \\
Instance generation interval ($d$) & 20 \\
Iterations per instance ($T$) & 100 \\
Batch size ($B$) & 20 \\
Parallel scale ($P$) & 50 \\
\bottomrule
\end{tabular}
\end{table}

\section{Detailed Analysis on Synthetic Instances}
\label{app:synthetic}
Our experiments on four small-sized instances from SD2 reveal that \textit{MIStar} generates an action space consisting of dozens to hundreds of possible moves per iteration using the Nopt2 neighborhood---up to 100 times larger than~\cite{30} for solving JSP. While this expanded action space provides greater optimization potential, it also introduces more suboptimal solutions, increasing the risk of local optima entrapment. Moreover, even for small-sized problems, OR-Tools could only optimally solve a small portion of instances within the time limit~\cite{25}. These observations underscore the complexity of the FJSP. Compared with rule-based improvement heuristics, \textit{MIStar} shows superior performance under the same iteration budget, with only a marginal shortfall (0.06\%) against the BI rule on the SD2 \(40\times 10\) instance at 100 iterations. Notably, even equipped with \textit{restart}, rule-based methods still tend to stagnate as iterations increase---likely due to the large and complex solution space---whereas \textit{MIStar} continues to improve, demonstrating the effectiveness of the learned policy. Additionally, \textit{MIStar} reduces runtime by directly outputting local moves, avoiding exhaustive evaluations across the entire neighborhood.

\section{Improvement over Varied Initial Solution Generation Methods}
\label{app:initS}
We systematically evaluate the optimization capabilities of \textit{MIStar} using six initialization strategies: two DRL-based construction methods with sampling (DANIEL(S) and HGNN(S)), random initialization, and three classical priority dispatching rules—shortest processing time (SPT), most work remaining (MWKR), and first-in-first-out (FIFO)~\cite{60}. Experiments are conducted on the SD2 dataset with models trained on instances of the same size, to assess the performance across different initialization methods under the fixed iteration budget. Improvement is measured by the makespan gap, calculated as $\left(1 - \frac{C_{\text{best}}}{C_0}\right) \times 100\%$, where $C_0$ is the makespan of the initial solution and $C_{\text{best}}$ is the incumbent optimal value. 

The results are detailed in Table~\ref{tab:initS}. It can be observed that the quality of initial solutions impacts the search efficiency of \textit{MIStar}. With the same number of iterations, better initial solutions enable \textit{MIStar} to generate higher-quality schedules, while poorer ones typically allow for larger relative improvements. Notably, after 400 iterations, solutions initialized by HGNN(S), MWKR, and FIFO outperform those generated by DANIEL(S) with 0 iteration on the $10\times5$ instances. This demonstrates that, with a sufficient iteration budget, \textit{MIStar} has the potential to elevate weaker initial solutions to a performance level comparable to that of stronger ones. Moreover, even when starting from (near)-optimal solutions initialized by DANIEL(S), \textit{MIStar} still achieves notable improvements, further validating the effectiveness of our framework.

\begin{table}[]
\centering
\caption{Results of different initialization methods}
\label{tab:initS}
\resizebox{0.85\textwidth}{!}{%
\begin{tabular}{cllccccccccc}
\hline
\multicolumn{3}{c}{\multirow{3}{*}{\textbf{Method}}} &  & \multicolumn{8}{c}{SD2} \\ \cline{4-12} 
\multicolumn{3}{c}{} & \multirow{2}{*}{Iter.} & \multicolumn{2}{c}{$10 \times 5$} & \multicolumn{2}{c}{$20 \times 5$} & \multicolumn{2}{c}{$15 \times 10$} & \multicolumn{2}{c}{$20 \times 10$} \\
\multicolumn{3}{c}{} &  & Obj. & Gap & Obj. & Gap & Obj. & Gap & Obj. & Gap \\ \hline
\multicolumn{3}{c}{\multirow{4}{*}{DANIEL(S)}} & 0 & 365.26 & 0.00\% & 629.56 & 0.00\% & 522.51 & 0.00\% & 552.38 & 0.00\% \\
\multicolumn{3}{c}{} & 100 & 345.05 & 5.53\% & 615.12 & 2.29\% & 488.22 & 6.56\% & 530.93 & 3.88\% \\
\multicolumn{3}{c}{} & 200 & 343.43 & 5.98\% & 613.75 & 2.51\% & 476.93 & 8.72\% & 528.92 & 4.25\% \\
\multicolumn{3}{c}{} & 400 & \textbf{342.50} & 6.23\% & \textbf{612.73} & 2.67\% & \textbf{465.71} & 10.87\% & \textbf{525.49} & 4.87\% \\ \hline
\multicolumn{3}{c}{\multirow{4}{*}{HGNN(S)}} & 0 & 479.35 & 0.00\% & 963.03 & 0.00\% & 759.54 & 0.00\% & 984.64 & 0.00\% \\
\multicolumn{3}{c}{} & 100 & 375.20 & 21.73\% & 745.18 & 22.62\% & 677.96 & 10.74\% & 879.48 & 10.68\% \\
\multicolumn{3}{c}{} & 200 & 366.35 & 23.57\% & 702.90 & 27.01\% & 626.90 & 17.46\% & 812.53 & 17.48\% \\
\multicolumn{3}{c}{} & 400 & 359.10 & 25.09\% & 674.93 & 29.92\% & 569.81 & 24.98\% & 737.42 & 25.11\% \\ \hline
\multicolumn{3}{c}{\multirow{4}{*}{SPT}} & 0 & 503.86 & 0.00\% & 799.56 & 0.00\% & 670.56 & 0.00\% & 775.03 & 0.00\% \\
\multicolumn{3}{c}{} & 100 & 399.25 & 20.76\% & 692.43 & 13.40\% & 578.48 & 13.73\% & 695.34 & 10.28\% \\
\multicolumn{3}{c}{} & 200 & 393.74 & 21.86\% & 680.62 & 14.88\% & 550.94 & 17.84\% & 673.44 & 13.11\% \\
\multicolumn{3}{c}{} & 400 & 389.34 & 22.73\% & 671.05 & 16.07\% & 521.96 & 22.16\% & 648.48 & 16.33\% \\ \hline
\multicolumn{3}{c}{\multirow{4}{*}{MWKR}} & 0 & 493.36 & 0.00\% & 949.85 & 0.00\% & 737.44 & 0.00\% & 953.11 & 0.00\% \\
\multicolumn{3}{c}{} & 100 & 382.09 & 22.55\% & 747.12 & 21.34\% & 664.69 & 9.87\% & 858.45 & 9.93\% \\
\multicolumn{3}{c}{} & 200 & 372.62 & 24.47\% & 705.15 & 25.76\% & 618.86 & 16.08\% & 799.61 & 16.11\% \\
\multicolumn{3}{c}{} & 400 & 364.99 & 26.02\% & 676.88 & 28.74\% & 563.56 & 23.58\% & 729.91 & 23.42\% \\ \hline
\multicolumn{3}{c}{\multirow{4}{*}{FIFO}} & 0 & 476.96 & 0.00\% & 931.50 & 0.00\% & 763.12 & 0.00\% & 978.03 & 0.00\% \\
\multicolumn{3}{c}{} & 100 & 376.29 & 21.11\% & 729.85 & 21.65\% & 669.51 & 12.27\% & 861.06 & 11.96\% \\
\multicolumn{3}{c}{} & 200 & 367.70 & 22.91\% & 693.35 & 25.57\% & 618.74 & 18.92\% & 799.15 & 18.29\% \\
\multicolumn{3}{c}{} & 400 & 360.68 & 24.38\% & 668.26 & 28.26\% & 562.59 & 26.28\% & 728.91 & 25.47\% \\ \hline
\multicolumn{3}{c}{\multirow{4}{*}{RANDOM}} & 0 & 585.08 & 0.00\% & 1067.38 & 0.00\% & 1062.39 & 0.00\% & 1304.70 & 0.00\% \\
\multicolumn{3}{c}{} & 100 & 382.24 & 34.67\% & 750.69 & 29.67\% & 733.17 & 30.99\% & 960.68 & 26.37\% \\
\multicolumn{3}{c}{} & 200 & 371.38 & 36.52\% & 707.33 & 33.73\% & 661.03 & 37.78\% & 865.85 & 33.64\% \\
\multicolumn{3}{c}{} & 400 & 365.33 & \textbf{37.56\%} & 678.00 & \textbf{36.48\%} & 589.53 & \textbf{44.51\%} & 772.15 & \textbf{40.82\%} \\ \hline
\end{tabular}%
}
\vspace{-14pt}
\end{table}

\section{Results on Ablation Studies}
\label{app:ablation}
\subsection{Analysis on Components and Strategy}
We conducted comprehensive ablation studies to evaluate the contributions of the key components of our framework, including both the model architecture and the search strategy. The experiments were performed on four small-size instances from the SD2 dataset.

We first assessed the roles of the memory module and the parallel greedy search strategy. Three ablated variants are compared: the \textit{Baseline} model without any enhancements, a version without the parallel greedy search strategy (\textit{w/o Par.}), and another without the memory module (\textit{w/o Mem.}). Results in Table~\ref{tab:ablation} demonstrate the critical roles of both components. Although our parallel approach takes longer per iteration, it achieves superior solutions in fewer steps, improving search efficiency. In contrast, the model without parallelism runs faster but consistently converges to inferior local optima. Furthermore, while the memory module helps enhance the decision-making of the policy, it alone (i.e., \textit{w/o Par.}) proves insufficient to escape local optima. This limitation may stem from the fact that the FJSP solution space is extremely large and complex, making it challenging for the policy network to achieve effective global exploration through sparse, single-step sampling. For future work, a critical direction involves dynamically constraining the search space or optimizing the memory mechanism to selectively extract salient historical features.

Our tailored representation module, which is composed of multiple specialized graph encoders, is not only theoretically motivated by the structural characteristics of FJSP graphs but is also aligned with mainstream graph learning practices~\cite{43, 30, 27, 54}. The ablation results by removing each component (i.e., \textit{w/o GAT}, \textit{w/o GIN} and \textit{w/o HGAT}) confirm that each of them is essential for learning informative and discriminative state representations that support effective policy learning.
\begin{table}[]
\centering
\caption{Performance of ablated models}
\label{tab:ablation}
\resizebox{0.8\textwidth}{!}{%
\begin{threeparttable}
\begin{tabular}{cccccccccc}
\hline
\multirow{3}{*}{\textbf{Model}} & \multicolumn{9}{c}{SD2} \\ \cline{2-10} 
 & \multirow{2}{*}{Iter.} & \multicolumn{2}{c}{$10 \times 5$} & \multicolumn{2}{c}{$20 \times 5$} & \multicolumn{2}{c}{$15 \times 10$} & \multicolumn{2}{c}{$20 \times 10$} \\
 &  & Obj. & Time\tnote{1} & Obj. & Time & Obj. & Time & Obj. & Time \\ \hline
\multirow{3}{*}{Baseline} & 100 & 364.27 & 0.76s & 629.09 & 1.16s & 519.34 & 1.82s & 552.46 & 2.37s \\
 & 200 & 364.27 & 1.48s & 629.09 & 2.44s & 519.34 & 3.69s & 552.46 & 4.78s \\
 & 400 & 364.27 & 2.89s & 629.09 & 4.85s & 519.34 & 7.25s & 552.46 & 9.48s \\ \hline
\multirow{3}{*}{w/o GIN} & 100 & 348.31 & 4.97s & 616.71 & 10.72s & 491.78 & 18.12s & 548.61 & 24.80s \\
 & 200 & 347.06 & 10.78s & 614.98 & 22.88s & 484.36 & 37.58s & 545.97 & 53.68s \\
 & 400 & 345.73 & 20.56s & 613.98 & 45.90s & 477.80 & 1.45m & 544.31 & 1.91m \\ \hline
\multirow{3}{*}{w/o GAT} & 100 & 350.86 & 5.82s & 618.67 & 11.34s & 492.76 & 17.85s & 548.0 & 22.63s \\
 & 200 & 349.89 & 9.46s & 617.33 & 22.91s & 486.74 & 36.97s & 547.97 & 51.02s \\
 & 400 & 348.33 & 21.41s & 615.99 & 46.79s & 480.66 & 1.29m & 547.87 & 1.78m \\ \hline
\multirow{3}{*}{w/o HGAT} & 100 & 350.49 & 5.44s & 618.15 & 11.39s & 491.97 & 17.95s & 535.78 & 23.89s \\
 & 200 & 349.08 & 9.74s & 616.56 & 23.01s & 483.46 & 37.01s & 532.83 & 52.47s \\
 & 400 & 347.17 & 20.10s & 614.97 & 46.38s & 473.83 & 1.29m & 529.92 & 1.67m \\ \hline
\multirow{3}{*}{w/o Par.} & 100 & 363.02 & 0.59s & 627.20 & 0.98s & 518.28 & 1.68s & 551.39 & 2.41s \\
 & 200 & 363.02 & 1.22s & 627.20 & 2.13s & 518.28 & 4.02s & 551.39 & 6.18s \\
 & 400 & 363.02 & 2.59s & 627.20 & 5.05s & 518.28 & 11.72s & 551.39 & 17.57s \\ \hline
\multirow{3}{*}{w/o Mem.} & 100 & 346.11 & 5.47s & 615.99 & 13.98s & 490.63 & 17.25s & 534.31 & 25.47s \\
 & 200 & 344.76 & 11.04s & 615.32 & 25.33s & 486.16 & 35.85s & 531.71 & 51.08s \\
 & 400 & 343.53 & 20.24s & 614.80 & 48.91s & 482.16 & 73.14s & 528.44 & 1.69m \\ \hline
\multirow{3}{*}{Ours} & 100 & 345.05 & 5.27s & 615.12 & 11.55s & 488.22 & 18.09s & 530.93 & 25.76s \\
 & 200 & 343.43 & 10.33s & 613.75 & 23.50s & 476.93 & 37.42s & 528.92 & 53.25s \\
 & 400 & \textbf{342.50} & 21.39s & \textbf{612.73} & 47.93s & \textbf{465.71} & 1.31m & \textbf{525.49} & 1.88m \\ \hline
\end{tabular}%
\begin{tablenotes}   
        \footnotesize              
        \item[1] “s” and “m” denote seconds and minutes, respectively.
\end{tablenotes}
\end{threeparttable} 
}
\end{table}
\subsection{Evaluation of Search Scale}
Using models trained with $P=50$, we systematically evaluated the search strategy across different parallel scales $P \in \{10, 20, 30, 40, 50, 60, 70\}$ on SD2 instances of two sizes. The result curves are illustrated in Figure~\ref{fig:parallel}. It is evident that increasing $P$ leads to longer runtimes under a fixed number of iterations. Moreover, the runtime scales linearly with the iteration count across different $P$ values. Regarding solution quality, the 10$\times$5 instance has a relatively small neighborhood (approximately 80 solutions), making $P=20$ sufficient to cover most promising moves. Consequently, further increasing $P$ does not yield noticeable improvement. On the other hand, the 15$\times$10 instance has around 240 neighbors, so increasing $P$ from 10 to 60 continuously improves performance. However, the difference between $P=60$ and $P=70$ becomes negligible, indicating a saturation point beyond which additional evaluations offer little benefit. These observations highlight the importance of selecting an appropriate $P$ based on the problem size and the flexibility of alternative machines to balance performance gains and computational cost. In future work, we plan to enable the network to dynamically adapt $P$ according to instance characteristics, reducing manual tuning efforts.

\begin{figure}[htbp]
    \centering
    \begin{subfigure}[b]{0.75\linewidth}
        \centering
        \includegraphics[width=\linewidth]{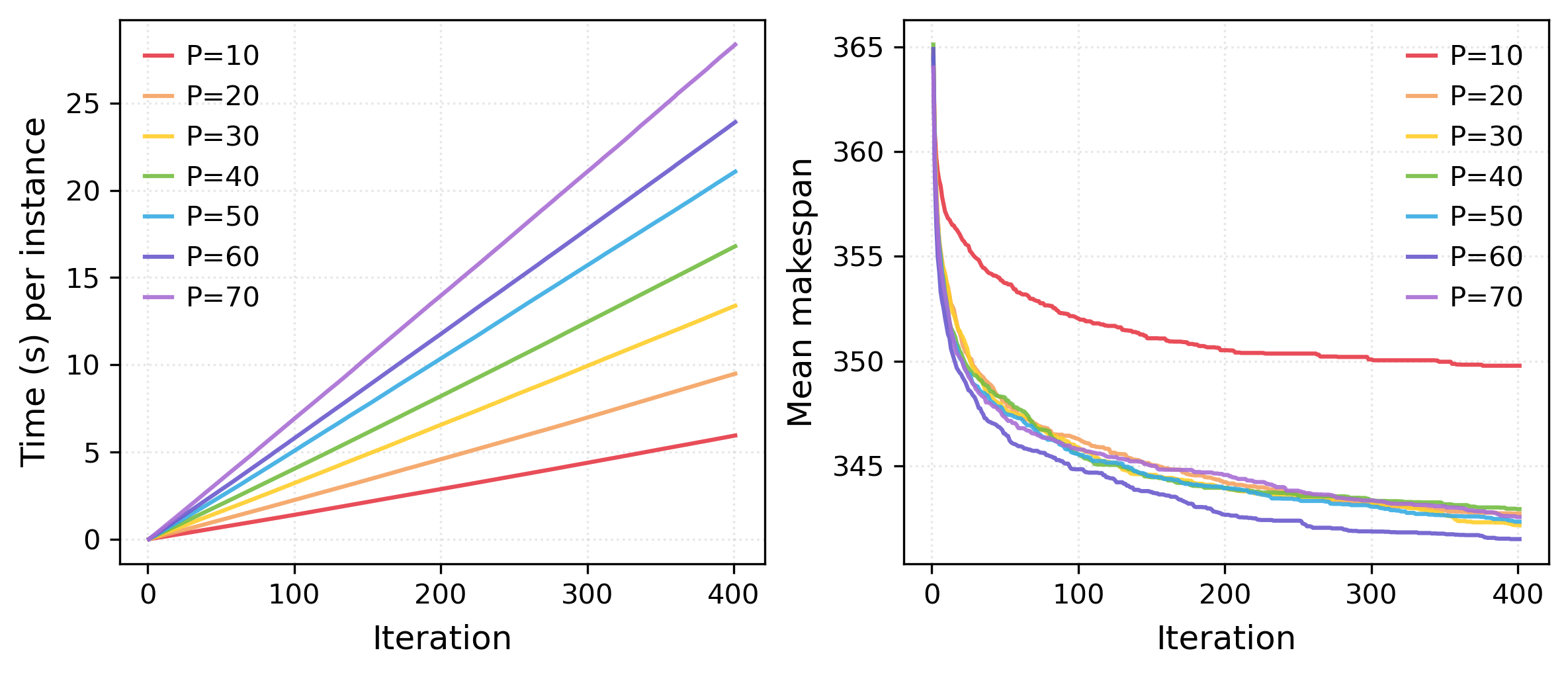}
        \caption{$10 \times 5$ instances.}
        \label{fig:curve-1005}
    \end{subfigure}
    \begin{subfigure}[b]{0.75\linewidth}
        \centering
        \includegraphics[width=\linewidth]{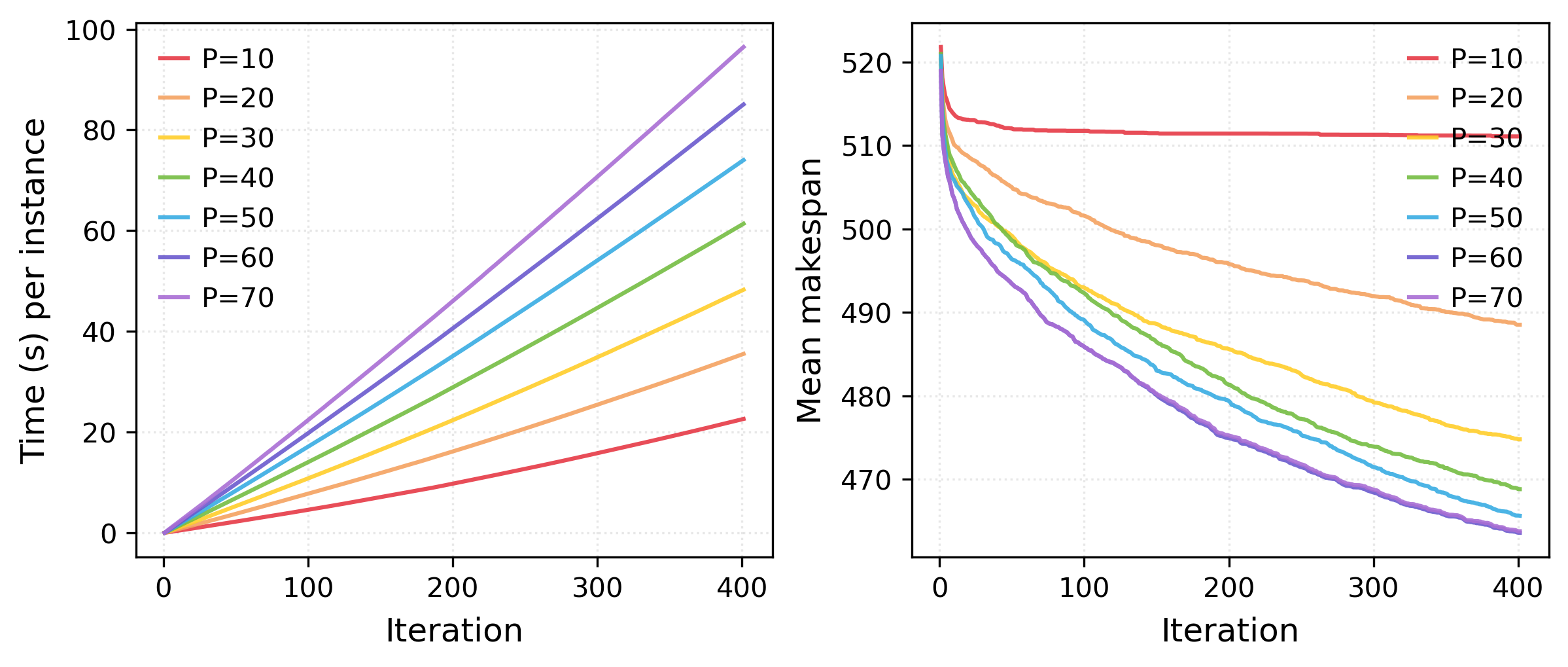}
        \caption{$15 \times 10$ instances.}
        \label{fig:curve-1510}
    \end{subfigure}

    \caption{\textbf{Runtime (left) and performance (right) curves across different parallel scale.}}
    \label{fig:parallel}
\end{figure}
\end{document}